\documentclass[11pt]{article}
\pdfoutput=1

\usepackage[utf8]{inputenc}
\usepackage[T1]{fontenc}
\usepackage[letterpaper,margin=1in]{geometry}
\usepackage{microtype}
\usepackage{xcolor}
\usepackage{graphicx}
\usepackage{subfigure}
\usepackage{booktabs}
\usepackage{amsmath,amssymb,mathtools,amsthm}
\usepackage[round]{natbib}
\usepackage{url}
\usepackage{hyperref}
\usepackage[capitalize,noabbrev]{cleveref}
\hypersetup{
  pdftitle={Active Inference as Context Acquisition for AI Agents},
  pdfauthor={Sanchayan Dutta, Sai Niranjan Ramachandran, Suvrit Sra},
  pdfsubject={Active inference for context acquisition in AI agents},
  pdfkeywords={active inference, context acquisition, AI agents, expected free energy, optimal question asking}
}
\usepackage{placeins}
\usepackage{listings}
\usepackage[most]{tcolorbox}
\tcbuselibrary{listings,breakable}
\usepackage{ss}

\lstdefinestyle{oqalist}{
  basicstyle=\ttfamily\footnotesize,
  columns=fullflexible,
  breaklines=true,
  breakatwhitespace=false,
  breakindent=0pt,
  breakautoindent=false,
  postbreak=\mbox{},
  showstringspaces=false,
  keepspaces=true
}

\newtcblisting{oqacode}[3][]{%
  enhanced,
  breakable,
  colback=#2!4,
  colframe=#2!60!black,
  boxrule=0.6pt,
  arc=2pt,
  left=6pt,right=6pt,top=4pt,bottom=4pt,
  title={#3},
  fonttitle=\bfseries,
  listing only,
  listing options={style=oqalist, breakautoindent=false, breakindent=0pt},
  #1
}

\newtcolorbox{oqafig}[3][]{%
  enhanced,
  breakable,
  colback=#2!4,
  colframe=#2!60!black,
  boxrule=0.6pt,
  arc=2pt,
  left=6pt,right=6pt,top=4pt,bottom=4pt,
  title={#3},
  fonttitle=\bfseries,
  #1
}

\theoremstyle{plain}
\newtheorem{theorem}{Theorem}[section]
\newtheorem{proposition}[theorem]{Proposition}

\theoremstyle{definition}

\theoremstyle{remark}

\title{Active Inference as Context Acquisition for AI Agents}
\author{%
\name Sanchayan Dutta \email{dutta@ucdavis.edu}\\
\addr Department of Mathematics, University of California, Davis\\[2pt]
\name Sai Niranjan Ramachandran \email{sainiranjan.ramachandran@tum.de}\\
\addr Technical University of Munich, Garching, Germany\\[2pt]
\name Suvrit Sra \email{s.sra@tum.de}\\
\addr Technical University of Munich, Garching, Germany
}

\begin{document}
\maketitle

\begin{abstract}
Interactive AI agents must acquire the right context as efficiently as possible. When a user omits a constraint, preference, file, or task variable, an agent can proceed with a default assumption or spend tokens on a clarifying question, retrieval call, tool call, or prompt trial. We formulate this tradeoff as active inference for context acquisition. An inner inference step updates beliefs over a latent task state, and an outer decision selects the next context action, task action, or stop action to minimize expected free energy under cost. In deterministic settings, the epistemic term reduces to expected information gain, optionally normalized by token cost. We instantiate the framework in Optimal Question Asking (OQA), with exact posteriors and a dynamic programming oracle, and benchmark frontier language models on binary and multiway categorical tasks from 25 to 300 candidates. We also study clarification before generation and automated prompt optimization under token budgets. The formulation is model-agnostic and views active inference as a design principle for the context-acquisition layer of AI agents.
\end{abstract}

\section{Introduction}
Users of AI systems rarely specify everything up front. A constraint may be omitted or a preference left implicit. Faced with such inputs, the system can commit early and risk being wrong, or it can acquire more context first. The context action might be a clarifying question, retrieval call, tool call, prompt evaluation, or visual inspection. Each such action has a price. It costs tokens, adds latency, and can require effort from a user or environment.

We study this tradeoff using active inference. Let $x$ be the latent task state, such as intent, target, preference, missing constraint, or best prompt. Let $a$ be either a context action or a task action, and let $o$ be the resulting reply, trace, or observation, modeled by $p(o \mid x,a)$. For each candidate action, an inner inference step updates beliefs over $x$ under possible observations. An outer step then chooses the action that minimizes expected free energy, including risk, epistemic value, and cost. Under exact inference and deterministic observations, the epistemic term is expected information gain. The rule is simple. Acquire more context only when the expected reduction in relevant uncertainty justifies its cost.

To make uncertainty measurable, we instantiate the idea in \emph{Optimal Question Asking} (OQA). OQA turns question asking into a controlled Twenty Questions game \citep{jedynak2012twenty} over a finite attribute table. A hidden target is sampled uniformly. The agent asks about one listed attribute per turn. Answers are obtained by table lookup, either yes/no for binary tables or categorical values for $k$-ary tables. Each transcript induces a remaining consistent set $C_t$, or an equivalence class when several items share the same attribute vector. With noiseless answers, the posterior is uniform over $C_t$, so uncertainty is exact and entropy is $\log_2 |C_t|$. We also compute the optimal expected number of questions under the same allowed queries using dynamic programming. This gives a direct planning gap against an oracle.

We then test the same accounting view in two token-budgeted prompting workflows. In \emph{prompt autocompletion}, the model may spend a few clarification turns before writing a final description. In \emph{automated prompt optimization}, the model allocates a fixed evaluation budget across prompt variants and uses outcomes to infer which prompt is best. In both cases, tokens buy evidence about a hidden choice that matters for the final decision.

This framing is close to reinforcement learning and control-as-inference \citep{levine2018controlasinference,millidge2020relationship}. We do not treat active inference as a rival to RL. The distinction is operational. In context-acquisition settings, the agent can choose the next measurement and thereby shape the evidence stream before acting. Expected free energy gives a compact language for that choice.

\subsection{Main Contributions}\label{sec:main-contributions}
\begin{itemize}
\setlength{\itemsep}{0pt}
\setlength{\parskip}{0pt}
\setlength{\parsep}{0pt}
\item \textbf{Context acquisition as active inference.}
We cast questions, retrievals, tool calls, prompt trials, and inspections as actions whose purpose is to improve beliefs before a task action.

\item \textbf{A bilevel one-step formulation.}
We separate the inner belief update under a hypothetical observation from the outer choice of the next context action, task action, or stop action that minimizes expected free energy under cost.

\item \textbf{OQA, a benchmark with exact uncertainty.}
We introduce \emph{Optimal Question Asking} (OQA), a deterministic attribute-table benchmark in which a model identifies a hidden target by asking informative questions.

\item \textbf{Oracle baselines and stopping rules.}
For a fixed table, we compute an optimal questioning strategy using dynamic programming, and state value-of-information conditions for when context acquisition is worth its cost.

\item \textbf{Two prompting case studies with token budgets.}
We apply the same accounting view to clarification before generation and automated prompt optimization, reading token spending as active experiment selection.
\end{itemize}

We validate these ideas empirically in OQA on seven frontier models\footnote{Models: GPT-5, GPT-4.1, Gemini 2.5 Pro, Gemini 2.0 Flash, Claude Sonnet 4.5, Claude Haiku 4.5, and Grok 4.} via API, and in both prompting studies.

In the appendix, we include a discussion of how the bilevel view of one-step active inference can be used to mathematically model adaptive prompt attacks and possible defenses. We also discuss the structural similarities and differences between active inference and bilevel reinforcement learning.

\section{Related Work}\label{sec:related}
Active inference \citep{friston2017process,parr2019generalised} extends classical inference by incorporating ideas from control theory. The core idea is to select actions that maximize information gain through explicit decomposition of preference and epistemic uncertainty over hidden states. This information-theoretic perspective connects to a rich literature on optimal querying and exploration. 

Expected information gain is fundamental in Bayesian experimental design \citep{lindley1956information,chaloner1995design,rainforth2024modern} and drives action selection in active learning and Bayesian optimization through analogous mutual information objectives \citep{houlsby2011bald,hennig2012entropy}. Similarly, information-directed sampling in bandits balances regret against information acquisition \citep{russo2014ids}. A closely related line of work establishes stopping rules for interactions as a function of query cost \citep{haertel2008costs,bloodgood2009stopping}.

\paragraph{LLM information seeking and clarification.}
Recent work has begun to treat LLMs as active information seekers rather than passive respondents. UoT uses uncertainty-aware simulation and information-gain-motivated rewards to select follow-up questions \citep{hu2024uot}. Active preference inference uses probabilistic models induced by LLM prompts to ask informative preference questions \citep{piriyakulkij2023activepref}. CLAMBER evaluates whether models detect and clarify ambiguous information needs \citep{zhang2024clamber}, while Ask-when-Needed studies tool-use failures under unclear instructions \citep{wang2024learningask}. Active Task Disambiguation and BED-LLM both frame clarification as Bayesian experimental design for LLM agents \citep{kobalczyk2025active, choudhury2025bedllm}. Complementary work on termination, such as CaRT, studies when an agent has gathered enough information to stop and act \citep{liu2025cart}. Our work connects these directions to the active-inference expected-free-energy decomposition and adds exact oracle diagnostics through OQA.

Our setting, optimal question asking (OQA), instantiates these ideas in a Twenty Questions framework. Here, successive queries partition a hypothesis set until one target remains \citep{jedynak2012twenty}. Generalized binary search provides optimal splitting strategies under deterministic answers \citep{nowak2011geometry}, with extensions handling noisy responses and group identification \citep{nowak2009noisy,bellala2010extensions}. Algorithmically, our approach mirrors the bilevel optimization structure of meta-learning: an inner loop updates beliefs given an answer, while an outer loop selects which question maximizes expected utility \citep{colson2007bilevel,franceschi2018bilevel}. This parallels intrinsic motivation in RL, where information gain drives exploration \citep{houthooft2016vime,pathak2017curiosity}, and prompt optimization methods that search over discrete or continuous prompt spaces \citep{shin2020autoprompt,zhou2023ape}. Active inference agents have been deployed in similar settings, from OpenAI Gym environments \citep{cullen2018openai_gym} to cognitive models of visual search \citep{cullen_meta_bayesian_visual_search}.

\paragraph{Active inference and reinforcement learning.}
A sharp separation between active inference and RL would be misleading. Control-as-inference shows how RL objectives can be represented as probabilistic inference problems \citep{levine2018controlasinference}, and Millidge and coauthors show that active inference and control-as-inference differ mainly in how rewards, goals, or preferences are encoded in the graphical model \citep{millidge2020relationship,millidge2021thesis}. Millidge's retrospective makes the same point in plain language, while also emphasizing that the expected-free-energy objective foregrounds information gain and exploration \citep{millidge2024retrospective,millidge2021origin}. Our contribution is therefore not the claim that active inference is categorically different from RL. It is the claim that the active-inference decomposition gives a compact design language for LLM systems that can choose what evidence to buy before acting.

\begin{table}[!htbp]
\centering
\footnotesize
\setlength{\tabcolsep}{3.5pt}
\renewcommand{\arraystretch}{0.92}
\caption{Operational distinction used in this paper. The contrast is not categorical: RL can include information bonuses, and active inference can use RL solvers.}
\label{tab:rl-aif-differences}
\begin{tabular}{p{0.18\linewidth}p{0.38\linewidth}p{0.36\linewidth}}
\toprule
Aspect & Generic RL view & Active-inference view used here \\
\midrule
Control & Optimize actions for return under the sampled or observed data stream. & Choose actions that also determine what evidence arrives next. \\
Exploration & Usually added through bonuses, entropy, optimism, or posterior sampling. & Appears directly as epistemic value in expected free energy. \\
Belief and queries & Beliefs may be implicit in a value function, recurrent state, or model posterior. & An explicit posterior over intent, target, or best prompt makes queries first-class actions. \\
Evaluation & Return, regret, accuracy, or reward-model score. & Entropy drop, oracle gap, bits per token, and final task success. \\
\bottomrule
\end{tabular}
\vspace{-0.06in}
\end{table}

\section{One-Step Active Inference as Bilevel Optimization}\label{sec:ai-bilevel}
In single step active inference we separate belief update from action selection. Let $x\in\mathcal{X}$ be a latent state, let $a\in\mathcal{A}$ be an action, and let $o\in\mathcal{O}$ be the next observation. Given a likelihood $p_\theta(o\mid x,a)$ and a current belief $q_t(x)$, define the predictive distribution
\begin{align}
q_t(o\mid a)
&= \int p_\theta(o\mid x,a)\, q_t(x)\, dx,
\label{eq:predictive}\\
q_t(x\mid o,a)
&= \frac{p_\theta(o\mid x,a)\, q_t(x)}{q_t(o\mid a)},
\label{eq:bayes-posterior}
\end{align}
assuming $q_t(o\mid a)>0$ for the relevant $o$. In discrete settings, we can replace integrals with sums. Unless stated otherwise, $\log$ denotes the natural logarithm, so KL and mutual information are in nats; OQA reports entropy in bits. 

Fix $(a,o)$ and let $q(x)\in\mathcal{Q}$ be a candidate posterior. We define the variational free energy
\begin{align}
F_t(q; a,o)
&= \mathbb{E}_{q(x)}\!\Big[\log q(x) - \log p_\theta(o\mid x,a) - \log q_t(x)\Big]
\label{eq:free-energy}\\
&= -\log q_t(o\mid a)
+ \mathrm{KL}\!\big(q(x)\,\|\, q_t(x\mid o,a)\big).
\label{eq:free-energy-decomp}
\end{align}
The inner update is
\begin{equation}
q_{t+1}^\star(\cdot; a,o)\in \arg\min_{q\in\mathcal{Q}} F_t(q; a,o).
\label{eq:inner-argmin}
\end{equation}
If $\mathcal{Q}$ contains $q_t(\cdot\mid o,a)$, then $q_{t+1}^\star(x; a,o)=q_t(x\mid o,a)$. In the proof of concept experiments, $\mathcal{Q}$ contains the exact posterior, so the inner update is closed form.

Preferences over observations are represented by a distribution $p^\star(o)$. A standard one step expected free energy includes a term that penalizes ambiguity in the observation model through $H(o\mid x,a)$. In the deterministic answer settings used in OQA, this ambiguity term is zero, and the action score can be written as
\begin{equation}
\begin{aligned}
G_t(a)
&= \underbrace{\mathbb{E}_{o \sim q_t(\cdot \mid a)}\!\big[-\log p^\star(o)\big]}_{\text{expected risk}}
\\
&\quad - \underbrace{\mathbb{E}_{o \sim q_t(\cdot \mid a)}\!
\Big[\mathrm{KL}\!\big(q_{t+1}^\star(\cdot; a,o)\,\|\, q_t(\cdot)\big)\Big]}_{\text{expected KL change in belief}}.
\end{aligned}
\label{eq:efe-risk-ig}
\end{equation}
With an exact inner update, the second term is an information gain term, as made explicit below.

\begin{proposition}[Mutual-information form and deterministic special case]\label{prop:kl-mi}
Assume the inner solution is exact, $q_{t+1}^\star(x; a,o)=q_t(x\mid o,a)$, with predictive $q_t(o\mid a)$ from \eqref{eq:predictive}. Then
\begin{equation}
\mathbb{E}_{o \sim q_t(\cdot \mid a)}\!
\Big[\mathrm{KL}\!\big(q_{t+1}^\star(\cdot; a,o)\,\|\, q_t(\cdot)\big)\Big]
= I_{q_t}(x; o \mid a),
\label{eq:ig-mi}
\end{equation}
the mutual information under the joint $q_t(x)\,p_\theta(o\mid x,a)$. If $p^\star$ is constant, the risk term in \eqref{eq:efe-risk-ig} is independent of $a$, so
\begin{equation}
a^\star\in\arg\min_a G_t(a)
\quad\Longleftrightarrow\quad
a^\star\in\arg\max_a I_{q_t}(x;o\mid a).
\label{eq:pure-epistemic}
\end{equation}
Moreover,
\begin{equation}
I_{q_t}(x;o\mid a)=H_{q_t}(o\mid a)-\mathbb{E}_{x\sim q_t} H(o\mid x,a).
\label{eq:deterministic-entropy}
\end{equation}
If $H(o\mid x,a)=0$ for $q_t$-a.e.\ $x$, for example in discrete deterministic observation models, then $I_{q_t}(x;o\mid a)=H_{q_t}(o\mid a)$.
\end{proposition}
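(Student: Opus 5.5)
The plan is to verify the three claims of Proposition~\ref{prop:kl-mi} in turn by direct computation with the joint distribution $q_t(x)\,p_\theta(o\mid x,a)$. For \eqref{eq:ig-mi}, we substitute the exact inner solution $q_{t+1}^\star(x;a,o)=q_t(x\mid o,a)$ into the expected KL. We then write the expectation over $o\sim q_t(\cdot\mid a)$ and the KL's inner expectation over $x\sim q_t(\cdot\mid o,a)$ as a single expectation over the joint. The product $q_t(o\mid a)\,q_t(x\mid o,a)$ equals $q_t(x)\,p_\theta(o\mid x,a)$ by \eqref{eq:bayes-posterior}, so the expected KL becomes
\[
\mathbb{E}_{(x,o)}\!\left[\log\frac{q_t(x\mid o,a)}{q_t(x)}\right]
=\mathbb{E}_{(x,o)}\!\left[\log\frac{q_t(x)\,p_\theta(o\mid x,a)}{q_t(x)\,q_t(o\mid a)}\right],
\]
which is the definition of $I_{q_t}(x;o\mid a)$ as the KL between the joint and the product of marginals. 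The assumption $q_t(o\mid a)>0$ for relevant $o$ ensures the posterior is well defined on the support. I will handle measure-theoretic care only by noting that, in the discrete case, sums replace integrals and the terms with zero probability contribute zero under the convention $0\log 0=0$.

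For \eqref{eq:pure-epistemic}, suppose $p^\star\equiv c$ on $\mathcal{O}$. Then the risk term equals $-\log c$ for every $a$, because $q_t(\cdot\mid a)$ is a probability distribution. This requires $\mathcal{O}$ to be finite, or at least that a constant preference is normalizable; I will state this for the discrete setting. It follows that $G_t(a)=-\log c - I_{q_t}(x;o\mid a)$ by the first part. Minimizing over $a$ is therefore equivalent to maximizing the mutual information, and both argmin sets coincide.

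For \eqref{eq:deterministic-entropy}, I use the symmetric form of mutual information:
\[
I=\mathbb{E}\big[\log p_\theta(o\mid x,a)-\log q_t(o\mid a)\big],
\]
which is the same ratio as above after cancelling $q_t(x)$. The second expectation is $H_{q_t}(o\mid a)$. The first expectation, taken by conditioning on $x$, is $-\mathbb{E}_{x\sim q_t}H(o\mid x,a)$. Under deterministic discrete observations, each $p_\theta(\cdot\mid x,a)$ is a point mass, so $H(o\mid x,a)=0$ for $q_t$-a.e.\ $x$. The conditional term then vanishes and $I=H_{q_t}(o\mid a)$.

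The only delicate point is splitting the expectation into two entropy terms. This requires finiteness, and otherwise the split could give $\infty-\infty$. I will note that for discrete, in particular finite, $\mathcal{O}$ both entropies are finite, so the split is legitimate. In general, the identity holds whenever $H_{q_t}(o\mid a)<\infty$.
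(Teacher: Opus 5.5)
Your proposal is correct and follows the paper's proof. It substitutes the exact posterior to get the expected-KL form of mutual information, notes that a constant $p^\star$ makes the risk term action-independent, and invokes the standard entropy decomposition, while spelling out the Bayes-rule computation and the finiteness caveats that the paper leaves implicit. Your normalizability remark for constant $p^\star$ is not needed, since the risk expectation is taken under the normalized predictive $q_t(\cdot\mid a)$.
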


\begin{proof}
By definition under the joint $q_t(x)\,p_\theta(o\mid x,a)$,
\[
I_{q_t}(x;o\mid a)=\mathbb{E}_{o\sim q_t(\cdot\mid a)}\!\big[\mathrm{KL}(q_t(x\mid o,a)\|q_t(x))\big].
\]
Substituting $q_{t+1}^\star=q_t(\cdot\mid o,a)$ gives \eqref{eq:ig-mi}. If $p^\star$ is constant, $\mathbb{E}_{o}[-\log p^\star(o)]$ does not depend on $a$, which yields \eqref{eq:pure-epistemic}. The identity \eqref{eq:deterministic-entropy} is the standard entropy decomposition of mutual information.
\end{proof}

The dependence of \eqref{eq:efe-risk-ig} on the optimizer in \eqref{eq:inner-argmin} yields the bilevel program:
\begin{equation}
\label{eq:bilevel}
\boxed{%
\begin{aligned}
\min_{a \in \mathcal{A}} \quad & G_t(a)\\
\text{s.t.}\quad
& q_{t+1}^\star(\cdot; a,o)\in
\arg\min_{q\in\mathcal{Q}}\, F_t(q; a,o),
\\
& \forall o\in\mathcal{O}\ \text{with}\ q_t(o\mid a)>0 .
\end{aligned}}
\end{equation}
The outer expectation in $G_t(a)$ is taken over $o \sim q_t(\cdot\mid a)$.

\begin{proposition}[Differentiability of the bilevel objective]\label{prop:bilevel-differentiable}
Let $\mathcal{Q}=\{q_\phi:\phi\in\mathbb{R}^m\}$ and write $F_t(\phi; a,o):=F_t(q_\phi; a,o)$. Define
\begin{equation}
\begin{aligned}
\phi^\star(a,o) &\in \arg\min_{\phi} F_t(\phi; a,o),\\
G_t(a) &= \mathbb{E}_{o\sim q_t(\cdot\mid a)}\big[\mathcal{L}(a,o,\phi^\star(a,o))\big],
\end{aligned}
\end{equation}
for a differentiable outer integrand $\mathcal{L}$. Suppose that for relevant $(a,o)$ the minimizer $\phi^\star(a,o)$ is isolated, that $F_t(\phi; a,o)$ is twice continuously differentiable in $(a,\phi)$, and that $\nabla^2_{\phi\phi}F_t(\phi^\star; a,o)\succ 0$. Then $\phi^\star(a,o)$ is locally differentiable in $a$. If, in addition, $q_t(o\mid a)$ and $\mathcal{L}$ satisfy standard conditions for differentiating under the expectation, then $G_t(a)$ is differentiable. Gradients can be computed by implicit differentiation or by unrolling an inner solver. For discrete $\mathcal{A}$, \eqref{eq:efe-risk-ig} can be evaluated by enumeration, or by a differentiable relaxation.
\end{proposition}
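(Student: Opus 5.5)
The natural route is the implicit function theorem applied to the first-order optimality condition of the inner problem, followed by a chain rule and a dominated-convergence argument for the outer expectation. Throughout, I treat $a$ as a continuous parameter in an open set of $\mathbb{R}^d$ (or a relaxation of $\mathcal{A}$), as the statement implicitly requires.

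\textbf{Step 1: the inner minimizer is a smooth function of $a$.} Fix $(a_0,o)$ with $q_t(o\mid a_0)>0$, and let $\phi_0=\phi^\star(a_0,o)$. Since $\phi_0$ is an unconstrained minimizer of a $C^2$ function on $\mathbb{R}^m$, it satisfies the stationarity equation
\[
\Psi(a,\phi):=\nabla_\phi F_t(\phi;a,o)=0 .
\]
The map $\Psi$ is $C^1$ in $(a,\phi)$, because $F_t$ is $C^2$. Its Jacobian in $\phi$ is $\nabla^2_{\phi\phi}F_t(\phi_0;a_0,o)$, which is positive definite and hence invertible. The implicit function theorem then gives a neighborhood $U$ of $a_0$ and a unique $C^1$ map $a\mapsto\hat\phi(a)$ with $\hat\phi(a_0)=\phi_0$ and $\Psi(a,\hat\phi(a))=0$. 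Its derivative is
\[
\partial_a\hat\phi=-\big(\nabla^2_{\phi\phi}F_t\big)^{-1}\nabla^2_{\phi a}F_t .
\]

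\textbf{Step 2: the smooth branch is the minimizer $\phi^\star$.} This is the step I expect to be the main obstacle. The implicit function theorem only produces a branch of stationary points, not of minimizers. I would proceed as follows.
\begin{itemize}
\item By continuity of the Hessian, $\nabla^2_{\phi\phi}F_t(\hat\phi(a);a,o)\succ0$ for $a$ near $a_0$. So each $\hat\phi(a)$ is a strict local minimizer.
\item To show it is the global minimizer selected by $\phi^\star$, I would use isolation of $\phi^\star$ together with a mild uniform-in-$a$ growth condition (e.g.\ level-boundedness or coercivity of $F_t$ near $a_0$). Upper semicontinuity of the argmin map then shows that minimizers for $a$ near $a_0$ must lie near $\phi_0$.
\item Uniqueness in the implicit function theorem then forces $\phi^\star(a,o)=\hat\phi(a)$.
\end{itemize}
If this extra condition is unwanted, I would instead read ``locally differentiable'' as a statement about the locally tracked isolated minimizer. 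That is the standard convention in bilevel and hypergradient work, and I would state it explicitly.

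\textbf{Step 3: differentiate the integrand.} For each $o$, the chain rule gives
\[
\frac{d}{da}\mathcal{L}(a,o,\phi^\star)=\partial_a\mathcal{L}+\partial_\phi\mathcal{L}\,\partial_a\phi^\star .
\]
The second term is exactly the implicit-differentiation formula from Step 1. Computing it reduces to one linear solve with the inner Hessian, which justifies the ``implicit differentiation'' claim. Unrolling a differentiable inner solver, such as gradient descent on $\phi$, gives an approximation of the same derivative.

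\textbf{Step 4: differentiate under the expectation.} Write
\[
G_t(a)=\int q_t(o\mid a)\,\mathcal{L}(a,o,\phi^\star(a,o))\,d\mu(o).
\]
I would invoke the assumed regularity: differentiability of $q_t(o\mid a)$ and of the integrand, with an integrable dominating bound on their $a$-derivatives near $a_0$. Dominated convergence (the Leibniz rule) then gives
\[
\nabla_a G_t=\mathbb{E}_o\!\left[\tfrac{d}{da}\mathcal{L}\right]+\int \nabla_a q_t(o\mid a)\,\mathcal{L}\,d\mu ,
\]
where the second term is a score-function term. For discrete $\mathcal{O}$ with finite support the integral is a finite sum, so no domination argument is needed.

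\textbf{Step 5: discrete action sets.} The final remark needs no proof. For finite $\mathcal{A}$, \eqref{eq:efe-risk-ig} is computed by evaluating $G_t(a)$ for each $a$. Alternatively, one replaces $a$ by a continuous relaxation, for example a softmax mixture over actions, and applies Steps 1--4 to that relaxation.
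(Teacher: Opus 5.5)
Your proposal is correct and follows the paper's route. The paper's proof sketch is exactly your Step 1: stationarity of the unconstrained minimizer, invertibility of the positive-definite Hessian, then the implicit function theorem. The paper leaves implicit both the identification of the IFT branch with $\phi^\star$ and the differentiation under the expectation, so your Steps 2--4 only add rigor.

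One small correction to Step 2: the upper-semicontinuity argument needs $\phi_0$ to be the \emph{unique} global minimizer at $a_0$, not merely an isolated one. Isolation already follows from $\nabla^2_{\phi\phi}F_t\succ 0$, and it does not stop a distant second minimizer from being selected for nearby $a$. So either assume uniqueness or keep your fallback reading of a locally tracked minimizer.
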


\begin{proof}[Proof sketch]
At an isolated minimizer, $\nabla_\phi F_t(\phi^\star; a,o)=0$. If $\nabla^2_{\phi\phi}F_t(\phi^\star; a,o)\succ 0$, the Jacobian with respect to $\phi$ is invertible, so the implicit function theorem yields local differentiability of $\phi^\star(a,o)$ in $a$ and the standard implicit gradient expression.
\end{proof}

\begin{proposition}[Information-per-cost stopping rule]\label{prop:cost-stopping}
Suppose the inner update is exact, preferences are constant over the possible observations of each information-gathering action, and taking no further measurement has score $0$. Let action $a$ have expected cost $c(a)>0$, and let the one-step objective be expected posterior entropy plus a cost penalty $\lambda c(a)$, with $\lambda>0$. Then an information-gathering action is worthwhile exactly when
\begin{equation}
I_{q_t}(x;o\mid a) > \lambda c(a).
\label{eq:worthwhile-query}
\end{equation}
Among worthwhile actions, the optimal one maximizes $I_{q_t}(x;o\mid a)-\lambda c(a)$. Equivalently, if a fixed cost budget is spent greedily in small units, the myopic selector ranks actions by information gained per unit cost.
\end{proposition}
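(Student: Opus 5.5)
The plan is to make the one-step objective explicit and then reduce the comparison to Proposition~\ref{prop:kl-mi}. For an information-gathering action $a$, I will write the score as
\[
J_t(a) := \mathbb{E}_{o\sim q_t(\cdot\mid a)}\big[H(q_{t+1}^\star(\cdot;a,o))\big] + \lambda c(a),
\]
where any action-independent constant risk term has been dropped. Under the hypothesis that preferences are constant over the possible observations of $a$, the expected risk in \eqref{eq:efe-risk-ig} is a constant. This is the same observation used for \eqref{eq:pure-epistemic}, and it lets the risk term be absorbed into the normalization.

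For the ``no further measurement'' action, I will normalize so that its score equals the current entropy $H(q_t)$. Since the statement says stopping has score $0$, I will measure all scores relative to $H(q_t)$, i.e.\ work with $J_t(a) - H(q_t)$. I expect this normalization to be the step needing most care. The statement is terse, and the cleanest reading is that ``score $0$'' means zero entropy reduction and zero cost. I will state this reading explicitly.

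Next, with the exact inner update $q_{t+1}^\star = q_t(\cdot\mid o,a)$, I will use the standard identity
\[
H(q_t) - \mathbb{E}_o\big[H(q_t(\cdot\mid o,a))\big] = I_{q_t}(x;o\mid a).
\]
In the discrete case this is $H(x) - H(x\mid o)$; in the continuous case I will use differential entropy, or equivalently the expected-KL form \eqref{eq:ig-mi} from Proposition~\ref{prop:kl-mi}. This gives
\[
J_t(a) - H(q_t) = -\big(I_{q_t}(x;o\mid a) - \lambda c(a)\big).
\]
Hence $a$ strictly beats stopping, i.e.\ has relative score below $0$, if and only if $I_{q_t}(x;o\mid a) > \lambda c(a)$, which is \eqref{eq:worthwhile-query}. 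Minimizing $J_t$ over the worthwhile actions is the same as maximizing $I - \lambda c$, which is the second claim.

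For the budget statement, I will argue myopically. Suppose a budget $B$ is spent in small units, with a fraction $w_a$ of it allocated to action $a$, which buys $w_a B / c(a)$ repetitions. To first order, i.e.\ treating each unit's gain as locally constant in the greedy step, the information bought is $\sum_a w_a B\, I(a)/c(a)$. This linear program in the weights $w$ is maximized by putting all mass on $\arg\max_a I(a)/c(a)$. Equivalently, at the right multiplier $\lambda$ the criterion $I - \lambda c > 0$ is the Lagrangian of the budget-constrained problem, and comparing per unit cost yields the ratio ranking. I will present this last part as a heuristic, first-order claim, not an exact optimality result, since information gains are not additive across repeated queries.
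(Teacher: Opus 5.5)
Your proposal is correct and follows the paper's proof: exact Bayes gives $\mathbb{E}_{o}H(q_t(\cdot\mid o,a)) = H(q_t) - I_{q_t}(x;o\mid a)$, subtracting the no-measurement score $H(q_t)$ (the paper reads ``score $0$'' the same way you do) leaves $\lambda c(a) - I_{q_t}(x;o\mid a)$, and both the threshold and the argmax follow. Your first-order budget argument for the ratio ranking goes beyond the paper, whose proof never justifies that last sentence, and you are right to call it heuristic, since at a fixed $\lambda$ the maximizers of $I-\lambda c$ and of $I/c$ can differ.
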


\begin{proof}
For a fixed current belief, $H_{q_t}(x)$ is independent of the next action. Exact Bayes gives
\[
\mathbb{E}_{o\sim q_t(\cdot\mid a)}H_{q_t}(x\mid o,a)=H_{q_t}(x)-I_{q_t}(x;o\mid a).
\]
Adding the cost penalty yields the one-step score $H_{q_t}(x)-I_{q_t}(x;o\mid a)+\lambda c(a)$. Subtracting the no-measurement score $H_{q_t}(x)$ gives $\lambda c(a)-I_{q_t}(x;o\mid a)$. Thus asking is beneficial precisely when \eqref{eq:worthwhile-query} holds, and the maximizing rule follows by dropping the action-independent entropy term.
\end{proof}

\begin{theorem}[Bayes-risk value of context]\label{thm:bayes-risk-voi}
Let $\mathcal{D}$ be a set of terminal decisions and let $\ell(d,x)$ be a finite loss. For a belief $q$ over $x$, define the Bayes risk
\begin{equation}
R(q)=\min_{d\in\mathcal{D}} \mathbb{E}_{x\sim q}\,\ell(d,x).
\label{eq:bayes-risk}
\end{equation}
Consider a context-acquisition action $a$ with cost $c(a)>0$, predictive distribution $q_t(o\mid a)$, and exact posterior $q_t(\cdot\mid o,a)$ after observing $o$. Suppose the agent may either act immediately, incurring risk $R(q_t)$, or take one context action $a$ and then act optimally under the resulting posterior, incurring expected score
\begin{equation}
\mathbb{E}_{o\sim q_t(\cdot\mid a)} R(q_t(\cdot\mid o,a)) + \lambda c(a).
\end{equation}
Then $a$ is strictly preferred to acting now exactly when
\begin{equation}
R(q_t)-\mathbb{E}_{o\sim q_t(\cdot\mid a)} R(q_t(\cdot\mid o,a)) > \lambda c(a).
\label{eq:voi-threshold}
\end{equation}
If the terminal loss is logarithmic and the terminal decision is a predictive distribution, then $R(q)=H(q)$ and the left side of \eqref{eq:voi-threshold} is $I_{q_t}(x;o\mid a)$.
\end{theorem}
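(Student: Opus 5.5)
The first claim is a direct comparison of the two scores, much like the proof of \cref{prop:cost-stopping}. Acting now has score $R(q_t)$. The one-step lookahead policy has score $\mathbb{E}_{o\sim q_t(\cdot\mid a)} R(q_t(\cdot\mid o,a)) + \lambda c(a)$. Action $a$ is strictly preferred exactly when the second score is smaller than the first, and rearranging that inequality gives \eqref{eq:voi-threshold}. The only care needed is well-definedness. Because $\ell$ is finite, each $R(\cdot)$ is finite, with the minimum taken as an infimum if $\mathcal{D}$ is infinite. The expectation over $o$ is taken only over outcomes with $q_t(o\mid a)>0$, where the exact posterior is defined.

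As a side remark, the left side of \eqref{eq:voi-threshold} is always nonnegative. The map $q\mapsto \mathbb{E}_q\ell(d,x)$ is linear, so $R$ is concave as a pointwise minimum of linear functionals. The posteriors also average back to the prior:
\[
\mathbb{E}_{o}\, q_t(\cdot\mid o,a)=q_t .
\]
Jensen's inequality then gives $\mathbb{E}_o R(q_t(\cdot\mid o,a))\le R(q_t)$. This fact is not needed for the equivalence itself.

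For the logarithmic case, take $\mathcal{D}$ to be the set of distributions $r$ over $\mathcal{X}$ and set $\ell(r,x)=-\log r(x)$. Then
\[
\mathbb{E}_{x\sim q}[-\log r(x)] = H(q)+\mathrm{KL}(q\,\|\,r),
\]
which is minimized uniquely at $r=q$ by Gibbs' inequality, so $R(q)=H(q)$. I would work with discrete $\mathcal{X}$, or read $H$ as differential entropy with densities in the continuous case. In that case the loss is no longer bounded, so finiteness has to be assumed only where the relevant entropies are finite, and I will flag this caveat.

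Substituting $R=H$, the left side of \eqref{eq:voi-threshold} becomes
\[
H_{q_t}(x)-\mathbb{E}_o H_{q_t}(x\mid o,a)=I_{q_t}(x;o\mid a),
\]
by the same entropy identity used in the proof of \cref{prop:cost-stopping}. Equivalently, it follows from \cref{prop:kl-mi}, since the expected KL from posterior to prior equals this entropy drop.

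I expect the main obstacle to be the logarithmic specialization rather than the equivalence. The point needing care is that the infimum over predictive distributions is attained, and is finite, at $r=q$. This requires the entropies involved to be finite, which the "finite loss" hypothesis does not literally cover for log loss.
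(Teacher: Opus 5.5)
Your proposal is correct and follows essentially the same route as the paper: a direct comparison of the two scores gives the threshold, and then the log-loss Bayes risk is identified with entropy and the expected entropy-reduction identity gives $I_{q_t}(x;o\mid a)$. Your Gibbs-inequality argument that $r=q$ is optimal, the Jensen remark on nonnegativity, and the caveat that log loss is not literally a finite loss are all sound, and the paper's proof leaves them implicit.
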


\begin{proof}
Acting immediately gives the optimal terminal risk $R(q_t)$ by definition. Taking $a$ first gives an observation-dependent posterior and then the optimal terminal risk for that posterior, plus the cost penalty. Comparing the two scores yields \eqref{eq:voi-threshold}. For logarithmic loss, the Bayes-optimal predictive distribution is the belief itself and the Bayes risk is entropy. The expected entropy reduction identity gives
\[
H(q_t)-\mathbb{E}_{o\sim q_t(\cdot\mid a)} H(q_t(\cdot\mid o,a))=I_{q_t}(x;o\mid a),
\]
which proves the final claim.
\end{proof}

\section{Active Inference as Context Acquisition}\label{sec:context-acquisition}
The preceding objective can be read as an inference-time control layer for AI agents. The agent maintains a belief over latent task state, then chooses among context actions, task actions, and stopping. A context action is any move that primarily changes the information available to the agent: asking the user, retrieving a passage, calling a tool to fill a missing field, evaluating a prompt variant, or inspecting part of an input. A task action is the move that commits to the requested output.

This view separates three roles that are often mixed in prompting practice. The \emph{belief state} stores what the agent currently thinks is true about the task. The \emph{measurement model} says what each context action could reveal. The \emph{preference or loss model} says which terminal outcomes matter. Expected free energy then scores a candidate context action by how much it is expected to reduce decision-relevant uncertainty, not merely by whether it seems like a reasonable question.

\begin{oqafig}{blue}{Algorithm 1: AIF-Context at inference time}
At turn $t$, maintain a belief $q_t(x)$ over latent task state.
\begin{enumerate}
\setlength{\itemsep}{0pt}
\item Propose a small action set $\mathcal{A}_t^{\mathrm{ctx}}$ of context actions and a set $\mathcal{A}_t^{\mathrm{task}}$ of terminal task actions.
\item For each $a\in\mathcal{A}_t^{\mathrm{ctx}}$, predict possible observations with $q_t(o\mid a)$ and compute the posterior $q_t(x\mid o,a)$ that would follow each observation.
\item Score $a$ by expected free energy, or by the Bayes-risk value of context in \Cref{thm:bayes-risk-voi} when a terminal loss is available.
\item If no context action beats acting now after cost, choose the best task action. Otherwise execute the best context action, observe $o_t$, update $q_{t+1}$, and repeat.
\end{enumerate}
\end{oqafig}

OQA instantiates this loop with a finite candidate set, deterministic measurements, and exact posteriors. Prompt autocompletion instantiates it with a latent style variable and fixed clarification templates. Prompt optimization instantiates it with a latent best-prompt identity and automatically scored trials. These are deliberately restricted cases, but the abstraction is broader: an AI agent can treat every token, query, tool call, or inspection as an experiment that competes against acting now.

The framework also clarifies what OQA does not claim. In deterministic OQA with flat observation preferences, one-step expected free energy reduces to greedy expected information gain. Greedy information gain is a useful local rule, but it is not the same as the globally optimal policy over a full dialogue. The dynamic programming oracle therefore plays a conceptual role beyond benchmarking: it measures the cost of myopic context acquisition.

\section{Binary OQA Experiments: 25 and 100 Candidates}\label{sec:binary-oqa}
\begin{figure}[!htbp]\centering

  \includegraphics[width=0.32\linewidth]{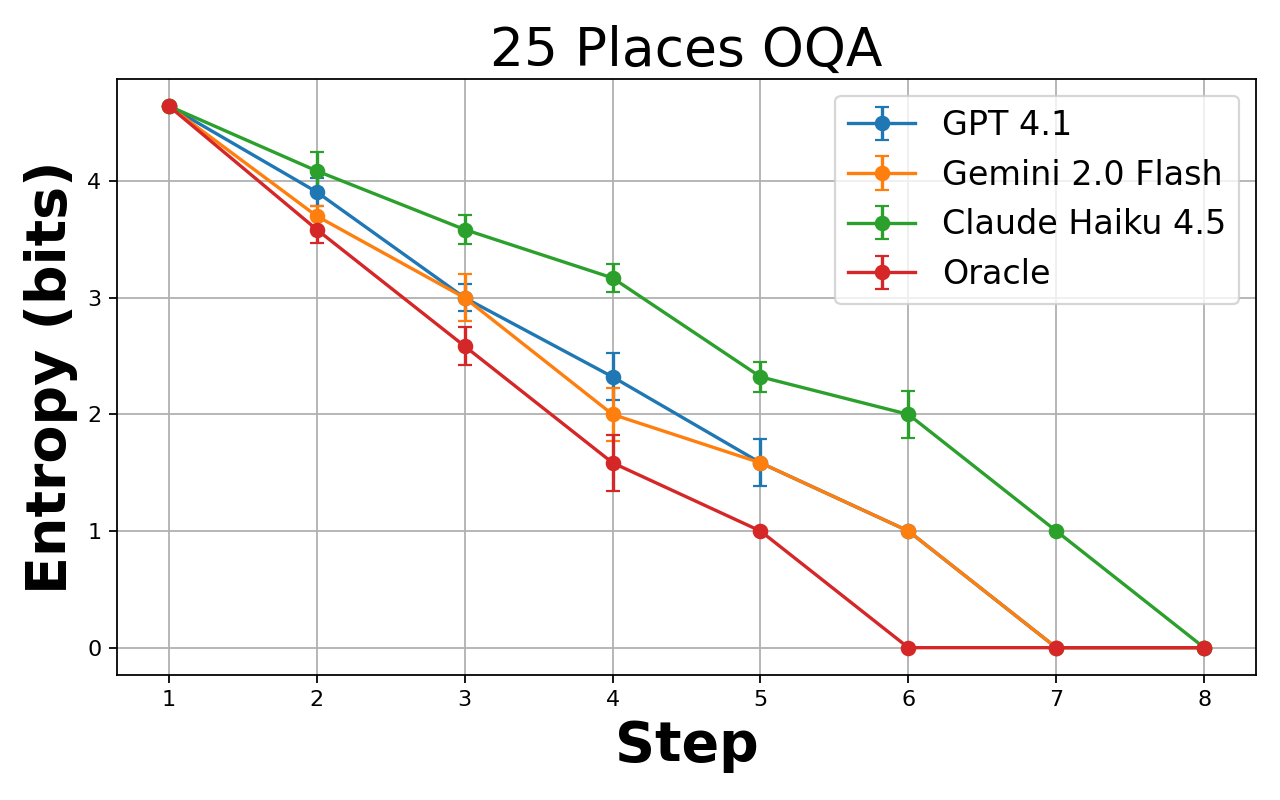}
  \includegraphics[width=0.32\linewidth]{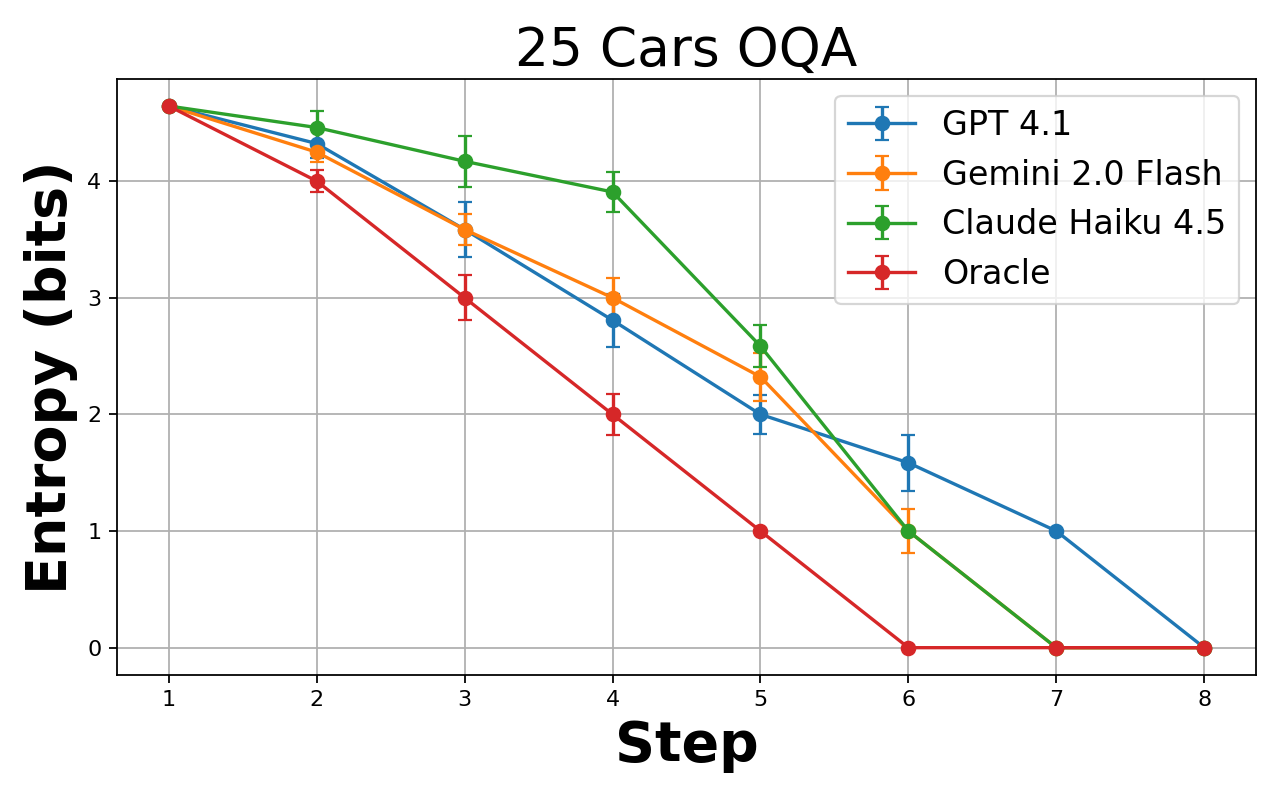}
  \includegraphics[width=0.32\linewidth]{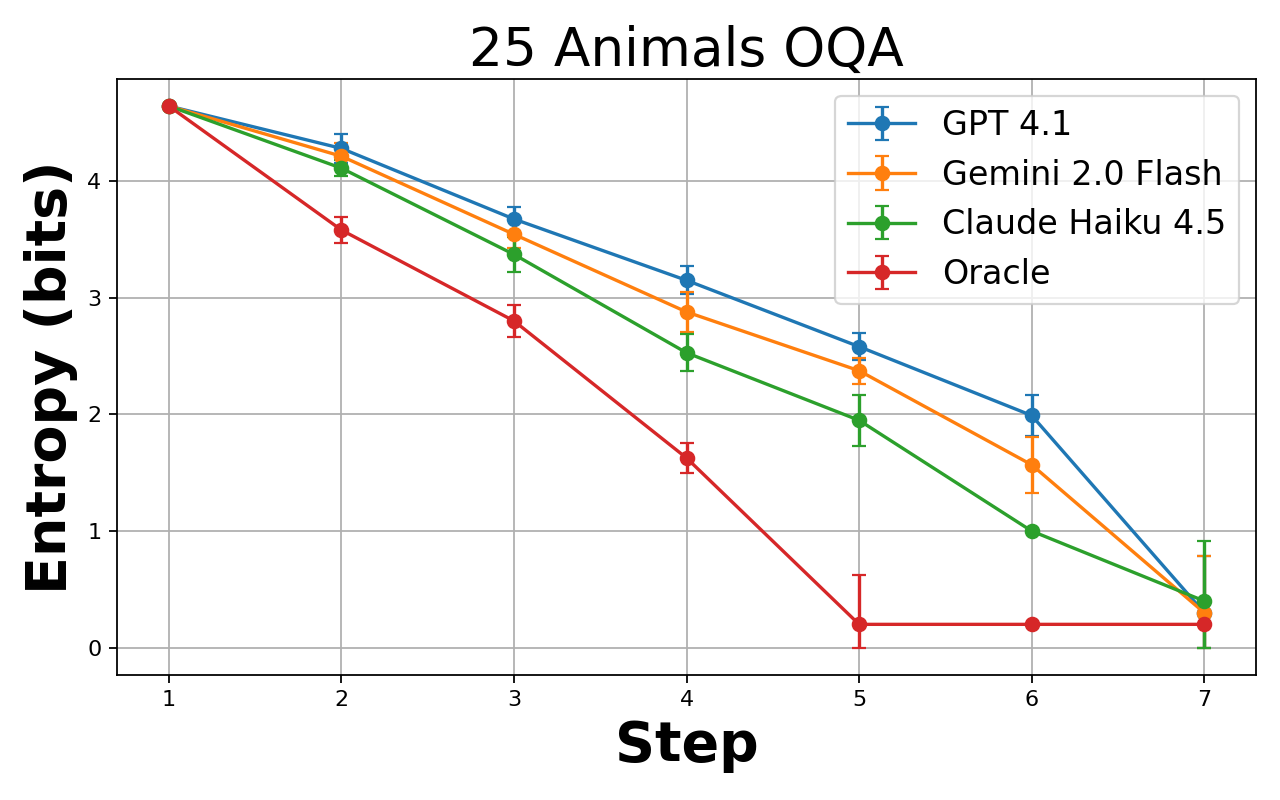}
\caption{Binary OQA with $N=25$. Curves show mean entropy $H_t=\log_2|C_t|$ across uniformly sampled targets, with $\pm 1$ standard deviation, along with the DP oracle.}
\label{fig:binary-oqa-25}
\end{figure}

\begin{figure}[!htbp]\centering
  \includegraphics[width=0.32\linewidth]{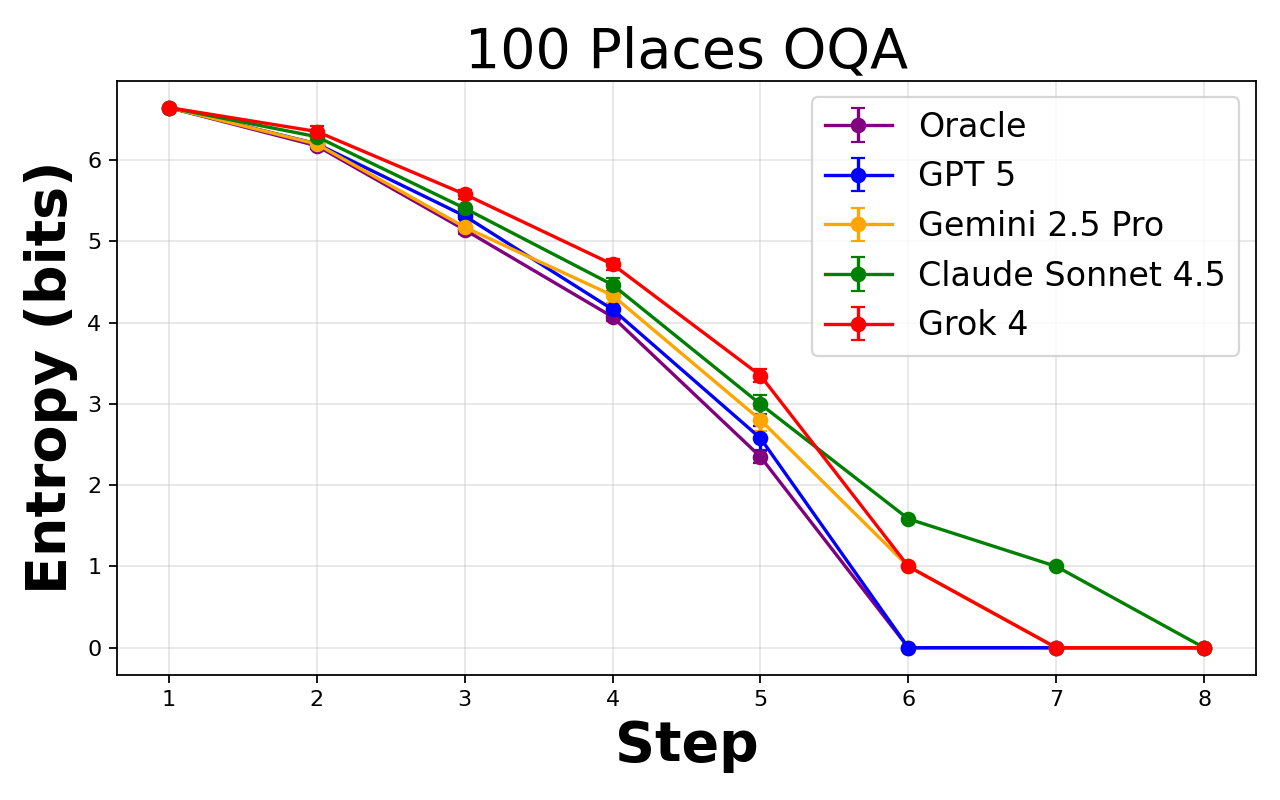}
%
  \includegraphics[width=0.32\linewidth]{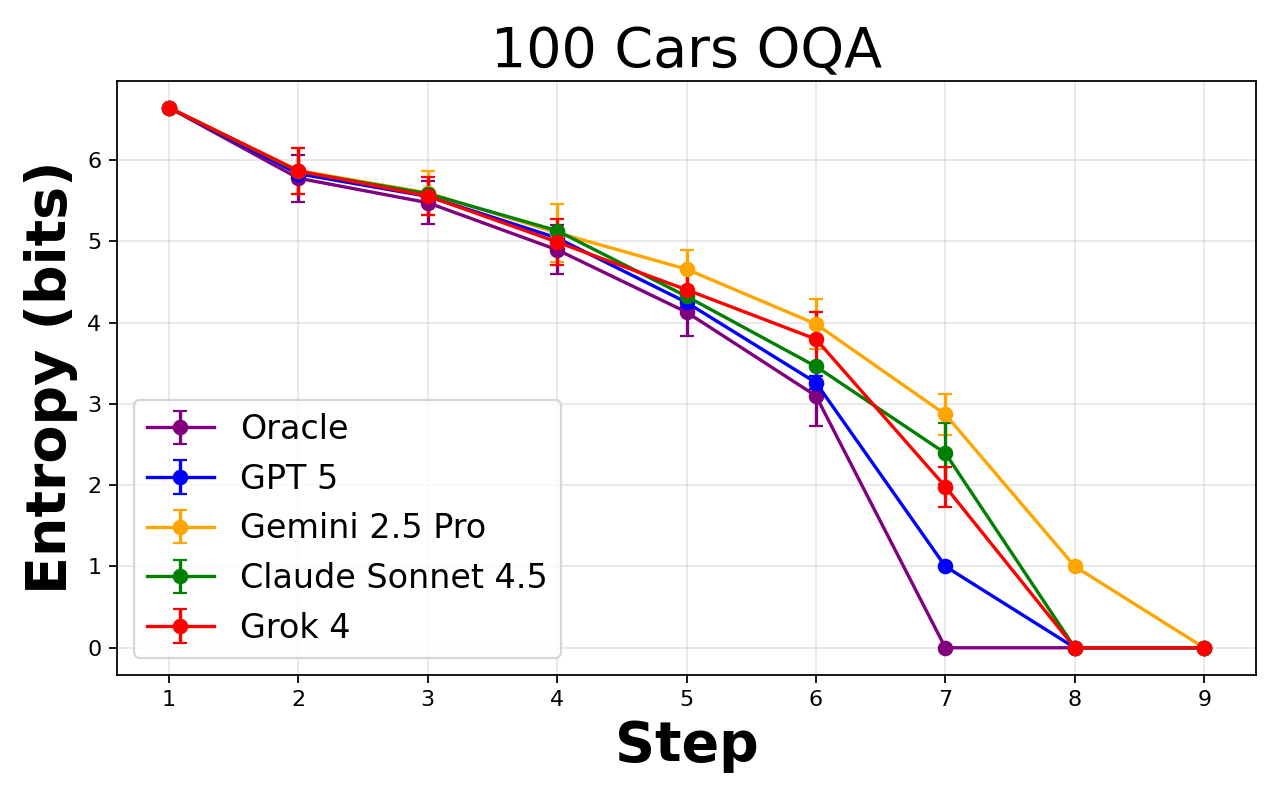}
%
  \includegraphics[width=0.32\linewidth]{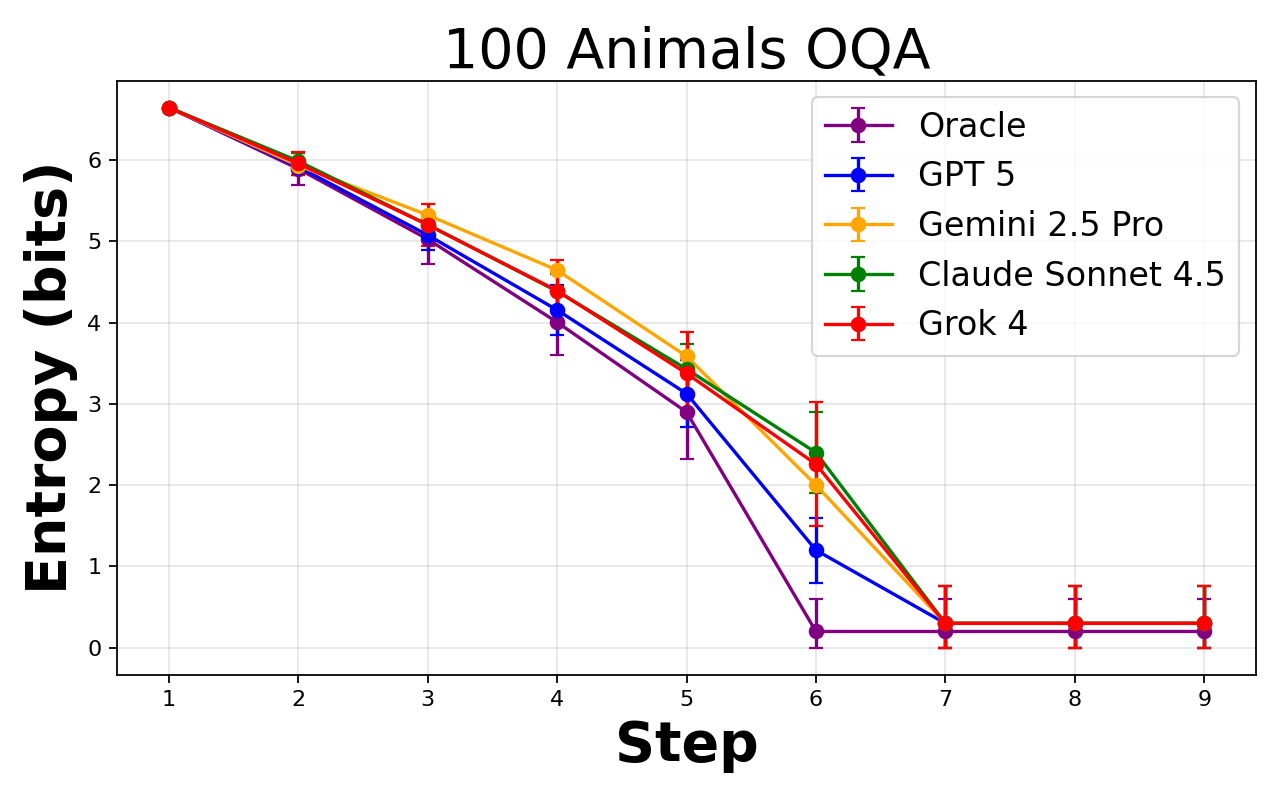}
%
\caption{Binary OQA with $N=100$. Curves show mean entropy $H_t=\log_2|C_t|$ across uniformly sampled targets, with $\pm 1$ standard deviation, along with the DP oracle. In Animals, duplicate attribute vectors can force stopping with a small residual entropy.}
\label{fig:binary-oqa-100}
\end{figure}

Binary OQA starts with a set $\mathcal{X}$ of $N$ items. Each item has a fixed yes or no value for each of the same attribute names, which can be viewed as its attribute vector. For example, in an animals table the item ``cat'' might have values like ``mammal = yes,'' ``has\_fur = yes,'' and ``can\_swim = no.'' One item $x^\star$ is chosen uniformly at random as the hidden target, and the candidate set starts at $C_0=\mathcal{X}$. Each turn asks for one attribute value and removes all items that disagree with the replies so far, leaving the updated candidate set $C_t$. With a uniform prior and noiseless answers, the belief is uniform on $C_t$. The uncertainty is $H_t=\log_2|C_t|$ and per question progress is $\Delta H_t = H_t - H_{t+1}$. A run ends when $|C_t|=1$, or earlier when no attribute can split $C_t$. The latter happens when the remaining items share the same full attribute vector, so some runs can end with $H_t>0$.

Let $T_{\text{model}}(x^\star)$ be the model's count and $T_{\text{DP}}(x^\star)$ the DP oracle's count under the same attribute menu and stopping rule. We define the metric ``planning gap" as the average number of extra questions the model uses compared to the oracle,
\(\mathbb{E}_{x^\star\sim\mathrm{Unif}(\mathcal{X})}\!\left[T_{\text{model}}(x^\star)-T_{\text{DP}}(x^\star)\right].\) A simple baseline is a greedy strategy that picks the attribute with the largest expected reduction in uncertainty. For a candidate set $C$ and a yes or no attribute $j$, asking $j$ splits $C$ into $C^{\text{yes}}$ and $C^{\text{no}}$. Near-balanced splits tend to be most informative. In this deterministic setting, the expected entropy reduction equals mutual information up to the log base, but it is optimal only for the next step and can be suboptimal over multiple steps.

Our DP oracle is optimal in the sense that it minimizes the expected number of questions until stopping. Let $\mathrm{Cost}(C)$ be the optimal expected number of additional questions when the remaining candidate set is $C$, assuming the hidden target is uniform on $C$. If an attribute $j$ splits $C$ into $C^{\text{yes}}$ and $C^{\text{no}}$, then asking $j$ costs one question plus the expected remaining cost weighted by branch sizes, giving $\mathrm{Cost}(C) =$
\begin{equation}
\min_{j:\ C^{\text{yes}},C^{\text{no}}\neq\emptyset}
\left(
1+\frac{|C^{\text{yes}}|}{|C|}\mathrm{Cost}(C^{\text{yes}})
+\frac{|C^{\text{no}}|}{|C|}\mathrm{Cost}(C^{\text{no}})
\right),
\label{eq:binary-dp}
\end{equation}
with $\mathrm{Cost}(C)=0$ when $|C|\le 1$ or when no attribute can split $C$.

\begin{theorem}[Optimality of the deterministic OQA oracle]\label{thm:oqa-oracle-optimal}
Consider any finite deterministic OQA game with candidate set $C$, a uniform target within $C$, a finite menu of allowed queries, and unit query cost. Each query $a$ partitions $C$ into nonempty cells $\{C_o(a):o\in\mathcal{O}_a\}$, and terminal sets are those with one candidate or no query that can split them. Let
\begin{equation}
V(C)=
\begin{cases}
0, & C \text{ terminal},\\[2mm]
\displaystyle \min_{a}\left(1+\sum_{o\in\mathcal{O}_a}\frac{|C_o(a)|}{|C|}V(C_o(a))\right), & \text{otherwise},
\end{cases}
\label{eq:generic-oqa-dp}
\end{equation}
where the minimum ranges over queries that split $C$. Then $V(C)$ is the minimum expected number of additional queries over all allowed adaptive decision trees. Any policy that attains the minimum in \eqref{eq:generic-oqa-dp} at every reachable set is optimal.
\end{theorem}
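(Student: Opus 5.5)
We argue by strong induction on $|C|$, comparing $V(C)$ with $V^{\mathrm{opt}}(C)$, the infimum over allowed adaptive decision trees of the expected number of queries until a terminal set is reached, with the target uniform on $C$. First we check that \eqref{eq:generic-oqa-dp} is well defined. Only splitting queries are allowed in the minimum, so every cell $C_o(a)$ is a nonempty proper subset of $C$ and satisfies $|C_o(a)|<|C|$. The recursion is therefore well founded on subsets of the finite set $C$. Since the query menu is finite, the minimum is over a finite nonempty set whenever $C$ is nonterminal. The base case is a terminal $C$. There the game stops, so both $V(C)$ and $V^{\mathrm{opt}}(C)$ equal $0$.

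For the inductive step, take a nonterminal $C$ and any adaptive tree $\tau$. We may assume that $\tau$ never asks a query that fails to split the current set. Such a query yields a single cell equal to the current set, so it adds cost without changing the state, and deleting it from the tree weakly reduces the expected count. Let $a$ be the root query of $\tau$. Conditional on the observation $o$, the target is uniform on $C_o(a)$, because the prior is uniform and answers are deterministic: the posterior is the prior restricted to the consistent cell, exactly as in the OQA setup. The subtree $\tau_o$ is then an adaptive tree for the game started at $C_o(a)$. The tower property gives
\[
\mathbb{E}[T_\tau]=1+\sum_{o}\frac{|C_o(a)|}{|C|}\,\mathbb{E}[T_{\tau_o}] .
\]
By the induction hypothesis, $\mathbb{E}[T_{\tau_o}]\ge V^{\mathrm{opt}}(C_o(a))=V(C_o(a))$. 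It follows that $\mathbb{E}[T_\tau]\ge 1+\sum_o \frac{|C_o(a)|}{|C|}V(C_o(a))\ge V(C)$, and hence $V^{\mathrm{opt}}(C)\ge V(C)$.

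For the reverse inequality, build the tree $\tau^\star$ that asks, at each reachable set, a minimizer of \eqref{eq:generic-oqa-dp}. At the children it continues with the corresponding trees, which by induction achieve $V(C_o(a))$. The same decomposition then shows that $\tau^\star$ achieves exactly $V(C)$. This gives $V^{\mathrm{opt}}(C)=V(C)$, and the infimum is attained. The same computation shows more: any policy that attains the minimum at every reachable set has expected cost $V(C)$, so it is optimal. This is the final claim.

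The main obstacle is the step that decomposes an arbitrary adaptive tree into a root query followed by subtrees that are themselves valid strategies for the subgames. This step needs two justifications. The first is that the conditional law on each branch is uniform on $C_o(a)$, which relies on deterministic answers and a uniform prior. The second is that removing non-splitting queries is harmless; alternatively, such queries can be shown to be strictly dominated, since each one costs one unit and leaves the state unchanged. Once these are established, the rest is the routine principle-of-optimality argument.
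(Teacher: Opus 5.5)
Your proof is correct and follows the paper's own argument: induction on $|C|$, a lower bound from decomposing any tree into a first query plus subtrees weighted by $|C_o(a)|/|C|$, and attainment by combining minimizing queries with optimal subtrees. Your extra checks make explicit three points the paper leaves implicit: well-foundedness (cells of a splitting query are proper subsets), the harmlessness of pruning non-splitting queries, and the uniformity of the conditional law on each cell.
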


\begin{proof}
The proof is by induction on $|C|$. Terminal sets have value zero by definition. For nonterminal $C$, any valid decision tree must choose some first query $a$, pay one unit of cost, and then continue independently inside the cell $C_o(a)$ selected by the truthful answer. Since the target is uniform on $C$, that cell occurs with probability $|C_o(a)|/|C|$. By the induction hypothesis, the best possible continuation cost in each branch is $V(C_o(a))$. Thus every decision tree has expected cost at least the right side of \eqref{eq:generic-oqa-dp}. Choosing a minimizing query and then using optimal subtrees for all child cells attains this bound, so the recurrence is both necessary and sufficient.
\end{proof}

We compute $\mathrm{Cost}(C)$ with caching, since the same candidate set can arise from different question sequences. To plot an oracle trace for a fixed target $x^\star$, start from $C_0=\mathcal{X}$. At turn $t$, choose an attribute that minimizes \eqref{eq:binary-dp}, filter to obtain $C_{t+1}$, and record $H_t=\log_2|C_t|$. We run this on Places, Cars, and Animals with $N\in\{25,100\}$, using one API call per question with \texttt{temperature=0}, fresh sessions per target, and no external tools. Figures~\ref{fig:binary-oqa-25} and \ref{fig:binary-oqa-100} show mean entropy trajectories over uniformly sampled targets, with $\pm 1$ standard deviation, alongside the DP oracle.

\textbf{Summary and takeaways.} We evaluate frontier LLMs on 25- and 100-tier binary OQA datasets, tracking posterior entropy exactly and comparing each model to a dynamic programming (DP) oracle under the same attribute menu and stopping rule. However, even in a clean setting where uncertainty can be measured exactly, models leave efficiency on the table. They do make steady progress, but they still ask avoidable questions compared to an optimal strategy.

\section{Multiway Categorical OQA Experiments: 100, 200, and 300 Candidates}
\label{sec:kary-oqa}

Similar to binary OQA, multiway categorical OQA starts with a set $\mathcal{X}$ of $N$ items. Each item has a fixed categorical value for each of the same attribute names, which can be viewed as its attribute vector. For example, an item might have values like \texttt{color = red}, \texttt{shape = hexagon}, and \texttt{material = steel}. One item $x^\star$ is chosen uniformly at random as the hidden target, and the candidate set starts at $C_0=\mathcal{X}$. Each turn asks for the value of one attribute and removes all items that disagree with the replies so far, leaving the updated candidate set $C_t$. With a uniform prior and deterministic answers, the belief is uniform on $C_t$. As before, the uncertainty is $H_t=\log_2|C_t|$ and per question progress is $\Delta H_t = H_t - H_{t+1}$. A run ends when $|C_t|=1$, or earlier when no attribute can split $C_t$.

Formally, let $\mathcal{A}$ be the finite attribute set, and for each $a\in\mathcal{A}$ let $a:\mathcal{X}\to\mathcal{V}_a$ map items to a finite value set. At turn $t$ the agent chooses $a_t\in\mathcal{A}$, observes the truthful reply $o_t=a_t(x^\star)$, and filters by consistency, $C_{t+1}=\{x\in C_t:\ a_t(x)=o_t\}$. We call a tier $k$-ary, where $k=\max_{a\in\mathcal{A}}|\mathcal{V}_a|$. In our released tiers, $k=5$. We keep the early stopping rule above for completeness, although in the released tiers the attribute vectors are unique, so runs end with $|C_t|=1$. As in the binary setting, we summarize planning by questions to stop. Let $T_{\text{model}}(x^\star)$ be the model's count and $T_{\text{DP}}(x^\star)$ the DP oracle's count under the same attribute menu and stopping rule. The planning gap is $\mathbb{E}_{x^\star\sim\mathrm{Unif}(\mathcal{X})}\!\left[T_{\text{model}}(x^\star)-T_{\text{DP}}(x^\star)\right]$.

A simple baseline is a greedy strategy that picks the attribute with the largest expected reduction in uncertainty. Fix a candidate set $C$ and an attribute $a$. For each value $v\in\mathcal{V}_a$, define $C_v=\{x\in C:\ a(x)=v\}$. Under the uniform belief on $C$, $\Pr[o=v]=|C_v|/|C|$, and the expected entropy reduction from asking $a$ is $\mathrm{EIG}(a;C)=\log_2|C|-\sum_{v\in\mathcal{V}_a:\,|C_v|>0}\frac{|C_v|}{|C|}\log_2|C_v|.$

Near-balanced partitions tend to be most informative. In this deterministic setting, $\mathrm{EIG}$ matches mutual information up to the log base (Prop.~\ref{prop:kl-mi}), but it is optimal only for the next step and can be suboptimal over multiple steps.

The DP oracle is optimal under our rules, in the sense that it minimizes the expected number of questions until stopping. Let $\mathrm{Cost}(C)$ be the optimal expected number of additional questions when the remaining candidate set is $C$, assuming the hidden target is uniform on $C$. For an attribute $a$, let $B_a(C)=\{v\in\mathcal{V}_a:\ |C_v|>0\}$. If $|B_a(C)|=1$, then $a$ does not split $C$ and is ignored. Otherwise, $\mathrm{Cost}(C)=$
\begin{equation}
\min_{a\in\mathcal{A}:\,|B_a(C)|\ge 2}\left(1+\sum_{v\in\mathcal{V}_a:\,|C_v|>0}\frac{|C_v|}{|C|}\,\mathrm{Cost}(C_v)\right),
\label{eq:kary-dp}
\end{equation}
with $\mathrm{Cost}(C)=0$ when $|C|\le 1$ or when no attribute can split $C$. We compute $\mathrm{Cost}(C)$ with caching using a canonical representation of $C$, such as a sorted tuple of item IDs. To plot an oracle trace for a fixed target $x^\star$, start from $C_0=\mathcal{X}$. At turn $t$, choose an attribute that minimizes \eqref{eq:kary-dp}, apply $C_{t+1}=\{x\in C_t:\ a_t(x)=o_t\}$ using the target's true value, and record $H_t=\log_2|C_t|$ until stopping.

We use three tiers with $|\mathcal{X}|\in\{100,200,300\}$. All tiers share the same eight attributes, namely \texttt{color}, \texttt{shape}, \texttt{material}, \texttt{size}, \texttt{pattern}, \texttt{origin}, \texttt{use\_case}, and \texttt{energy}. Each query must request the value of exactly one named attribute, and the evaluator answers by table lookup. 

We evaluate $30$ targets per tier with tool use disabled, using one API call per question with \texttt{temperature=0} in a fresh session per target. Mean entropy trajectories are plotted with $\pm 1$ standard deviation alongside the DP oracle in \Cref{fig:kary-oqa}.

\paragraph{Summary and takeaways.} We extend OQA to multiway categorical attributes using deterministic table lookups, with 100, 200, and 300 tiers and a fixed menu of eight attributes. We again track posterior entropy and measure efficiency by planning gap. Frontier models consistently shrink the consistent set over turns, and yet they remain less question-efficient than the dynamic programming (DP) oracle.

\begin{figure}[!htbp]
\centering
\IfFileExists{figs/kary100_entropy_plot.png}{
  \includegraphics[width=0.65\linewidth]{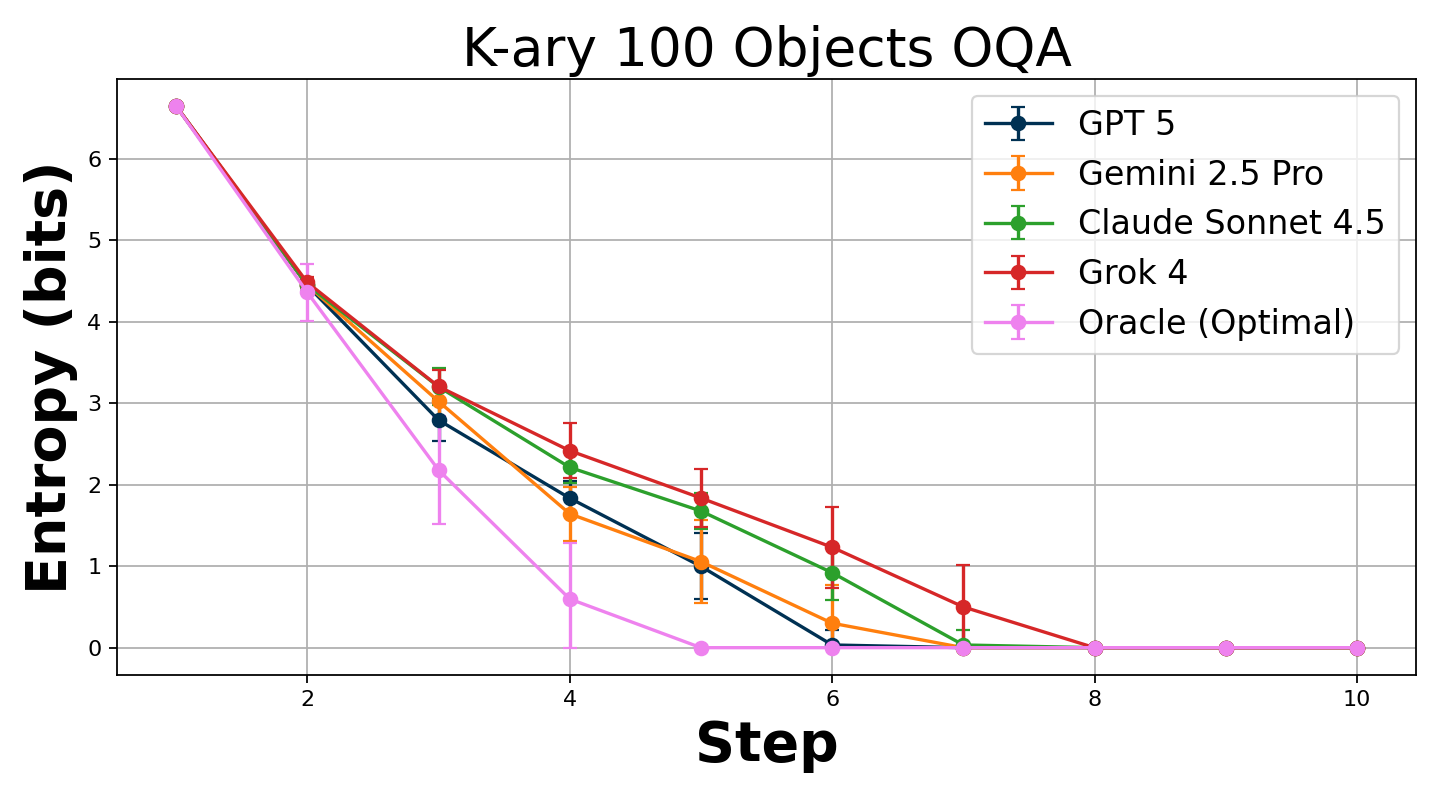}
}{
  \fbox{\parbox[c][0.80in][c]{0.95\linewidth}{\centering\small Multiway-100 curves (placeholder)}}
}
\vspace{0.03in}

\IfFileExists{figs/kary200_entropy_plot.png}{
  \includegraphics[width=0.65\linewidth]{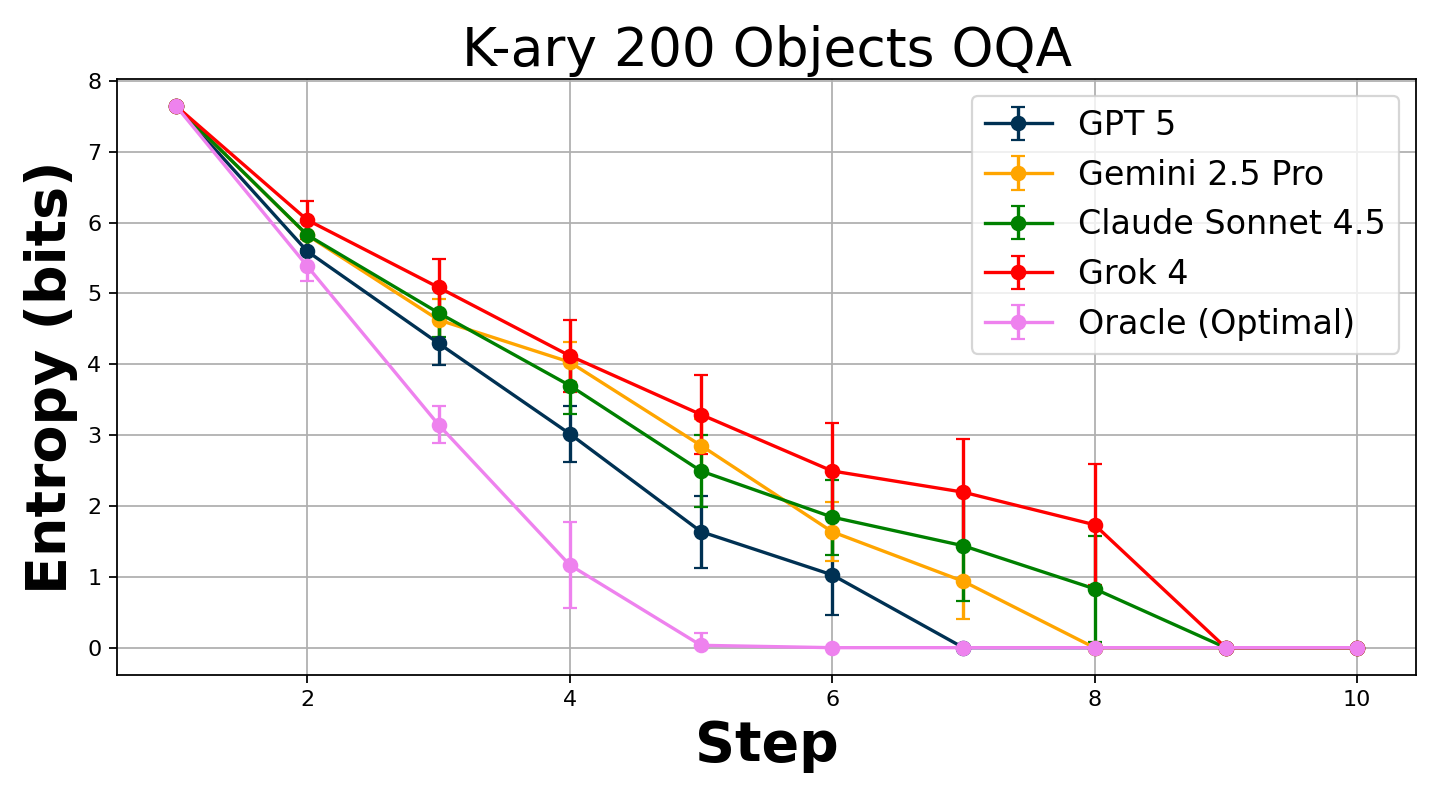}
}{
  \fbox{\parbox[c][0.85in][c]{0.95\linewidth}{\centering\small Multiway-200 curves (placeholder)}}
}
\vspace{0.03in}

\IfFileExists{figs/kary300_entropy_plot.png}{
  \includegraphics[width=0.65\linewidth]{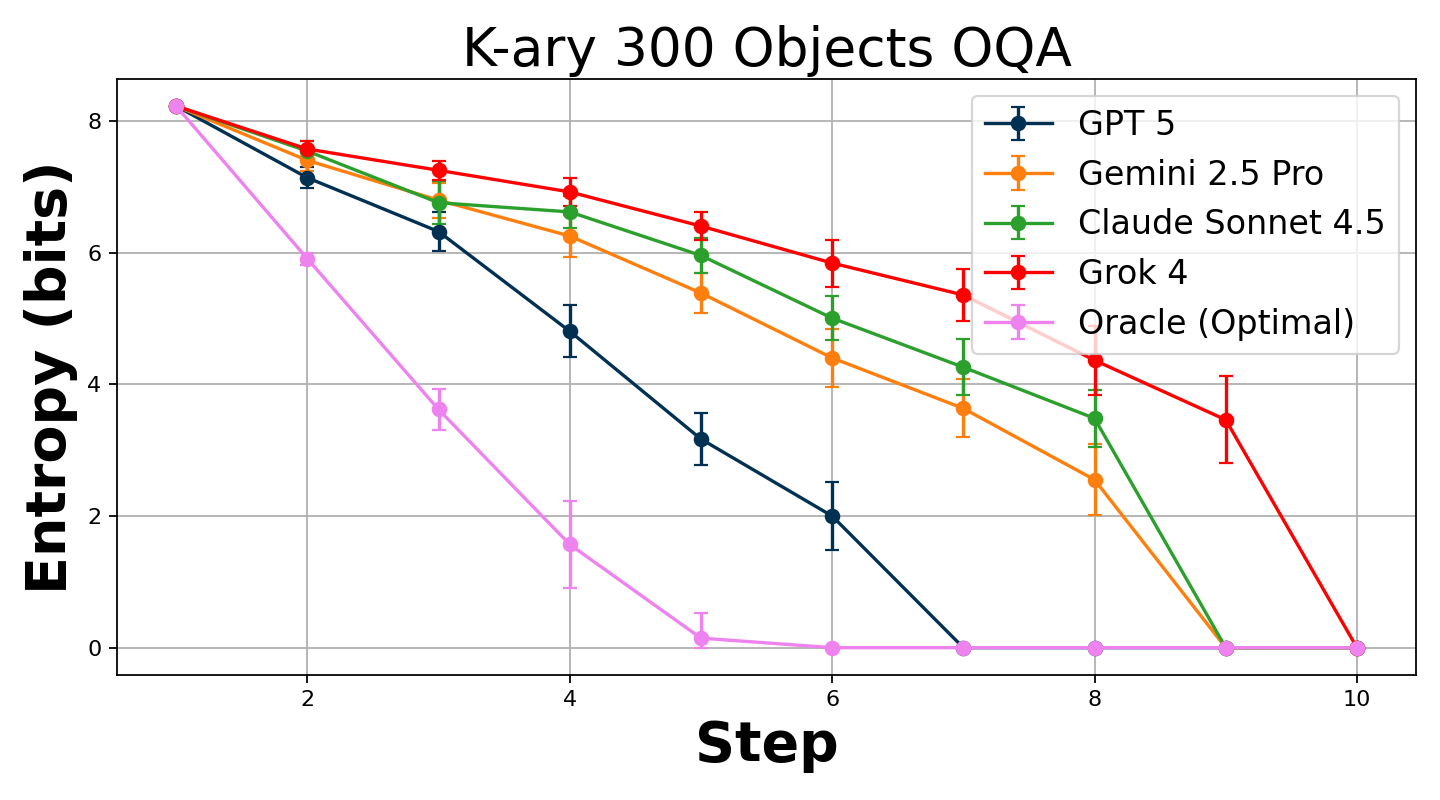}
}{
  \fbox{\parbox[c][0.85in][c]{0.95\linewidth}{\centering\small Multiway-300 curves (placeholder)}}
}
\caption{Multiway categorical OQA for $|\mathcal{X}|\in\{100,200,300\}$ (top to bottom). Curves show mean posterior entropy over uniformly sampled targets. The DP oracle uses the same query menu and stopping rule. Error bars show $\pm 1$ standard deviation when available.}

\label{fig:kary-oqa}
\vspace{-0.11in}
\end{figure}

\section{Prompt Autocompletion as Active Inference}
\label{sec:prompt-autocomplete}

This is our first prompt experiment. Suppose a user asks for a product description but does not specify a style. Each task provides a product name and three feature phrases. We model the desired style as a hidden variable \(U=(\text{tone},\text{length},\text{format})\). The tone is either formal or friendly, length is either short or medium, and format is either bullet points or a paragraph. We sample these three style choices independently and uniformly. The agent may ask up to three clarifying questions before writing one final description. In this experiment, the questions use fixed templates, and we simulate the answers from the hidden \(U\). This keeps the focus on the agent's choice of what to ask, rather than how the questions are phrased. Each question targets exactly one component of \(U\), and we never ask about the same component twice.

We assume that the answers are always truthful, and we add uncertainty only when updating the agent's belief about \(U\), using a symmetric label noise model, so the posterior does not become overly confident. We also charge a synthetic token cost for each clarification. The first costs \(c_1=24\) tokens, then \(c_2=48\), then \(c_3=72\). After any clarifications, we call the base model once and request a description using the MAP style \(\hat U\). We further assume that the agent's belief over styles factorizes across the three attributes, so \(q_t(U)=q_t(\text{tone})\,q_t(\text{length})\,q_t(\text{format})\), and we start from a uniform prior. When the agent asks about one attribute \(v\) (tone, length, or format), it updates only \(q_t(v)\). We model mistakes in this update with a fixed error rate \(\varepsilon=0.12\). The information gained from that question is \(\Delta I_t(v)=\mathrm{KL}(q_{t+1}(v)\,\|\,q_t(v))\), measured in bits.

After the agent produces the final description, a deterministic verifier checks it. The verifier requires each feature phrase to appear as an exact substring, and it checks format and word count. A short description has at most 70 words, while a medium description has 70 to 160 words. Tone is not checked, so the sampled style and the verifier reward do not perfectly match. We compare five question asking policies. \texttt{baseline} asks no questions and uses the default style. \texttt{ask\_all} asks about tone, length, and format. \texttt{random} samples \(K\) uniformly from \(\{0,1,2,3\}\), then asks \(K\) distinct attributes in random order. \texttt{active} chooses the unasked attribute with the largest entropy per expected token cost, \(H(q_t(v))/c_{|\mathcal{C}|+1}\), and stops when this score falls below \(\epsilon\). \texttt{active\_weighted} uses the same rule, but weights the entropy term by \(w_v\), with \(w_{\text{format}}=1.0\), \(w_{\text{length}}=0.6\), and \(w_{\text{tone}}=0.25\).

In \Cref{fig:prompt-autocomplete}, we fix \(\epsilon=0.02\) and \(K_{\max}=3\) on 48 tasks to show the cost accuracy trade-off across policies. We then tune \texttt{active\_weighted} via grid search over \(\epsilon\in\{0,0.005,0.01,0.02,0.04\}\) and \(K_{\max}\in\{1,2,3\}\), maximizing \(J(\epsilon,K_{\max})=\mathbb{E}[\mathrm{ok}]-0.02\,\mathbb{E}[\text{tokens\_total}]/1000\), where \(\text{tokens\_total}\) is the model-reported tokens for the final generation plus the synthetic clarification costs. We score on 24 training tasks and evaluate on 24 held out tasks. The best setting is \(\epsilon^\star=0.01\) and \(K_{\max}^\star=2\), reaching \(0.375\) compliance at about \(219\) tokens per task (vs.\ \texttt{baseline} at \(0.0417\) compliance and about \(112\) tokens).

\textbf{Summary and takeaways.} We study how to spend a limited token budget between clarifying turns and a final completion when the style is hidden, answers to fixed clarification templates are truthful, and when success is measured by a deterministic verifier. Policies that ask targeted clarifications improve verifier pass rates relative to asking none, showing that paying for a small number of well chosen questions can be worthwhile, but only when those questions buy information the final output will actually be judged on.
\begin{figure}[!htbp]
\centering
\subfigure[Clarifications used]{%
  \IfFileExists{figs/prompt_clarifications_hist_updated.png}{%
    \includegraphics[width=0.48\linewidth]{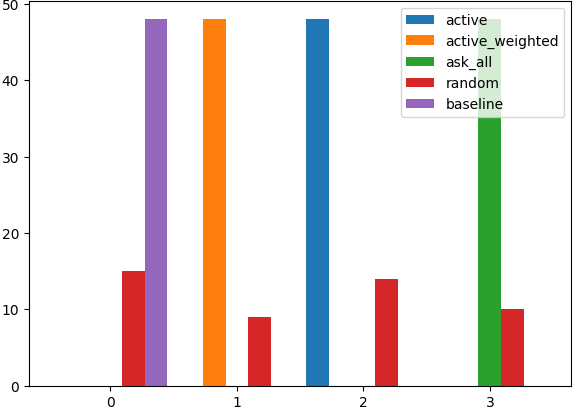}%
  }{%
    \fbox{\parbox[c][0.45in][c]{0.45\linewidth}{\centering\small
    Placeholder for clarifications histogram.}}%
  }}\hfill
\subfigure[Total token ECDF]{%
  \IfFileExists{figs/prompt_token_ecdf_updated.png}{%
    \includegraphics[width=0.48\linewidth]{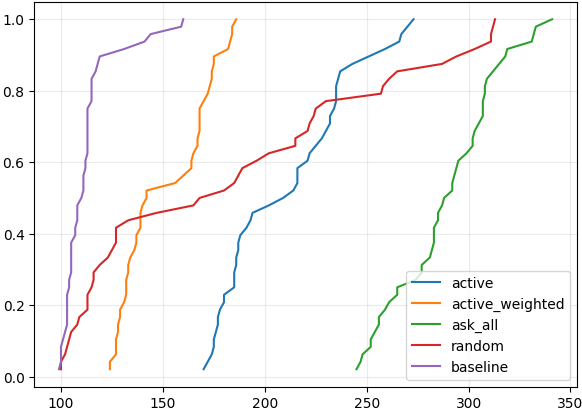}%
  }{%
    \fbox{\parbox[c][0.45in][c]{0.45\linewidth}{\centering\small
    Placeholder for token ECDF.}}%
  }}
\vspace{0.04in}

\subfigure[Compliance versus average tokens]{%
  \IfFileExists{figs/prompt_efficiency_frontier_largefonts.png}{%
    \includegraphics[width=0.68\linewidth]{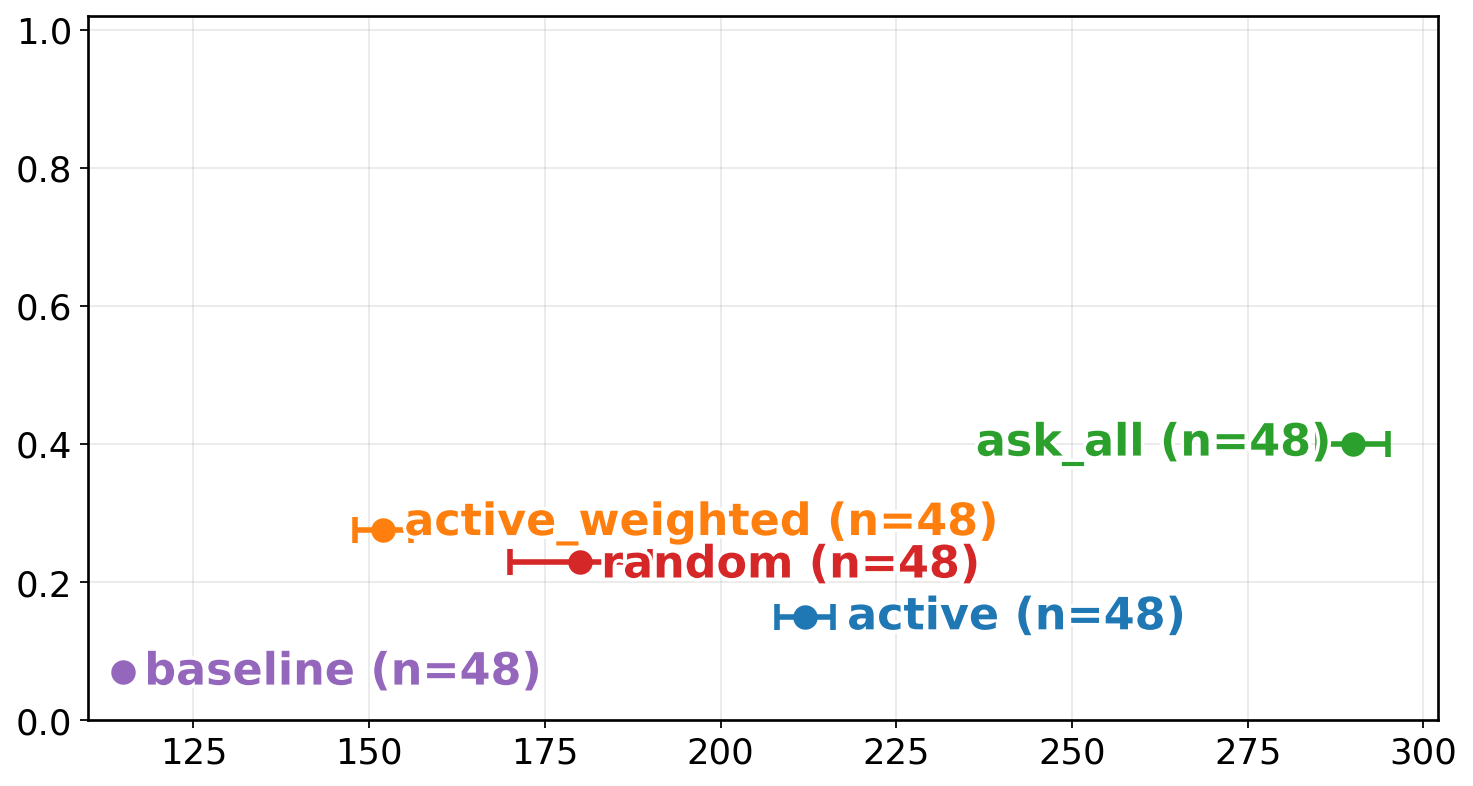}%
  }{%
    \fbox{\parbox[c][0.45in][c]{0.65\linewidth}{\centering\small
    Placeholder for efficiency frontier.}}%
  }}
\vspace{-0.03in}
\caption{\small Prompt autocompletion on 48 synthetic product tasks with $\epsilon=0.02$ and $K_{\max}=3$. Panels show clarification counts by policy, the ECDF of total tokens per task, and verifier pass rate versus average tokens. \texttt{active\_weighted} improves compliance at a modest token cost relative to \texttt{baseline} and \texttt{random}, while \texttt{ask\_all} spends the most tokens.}
\label{fig:prompt-autocomplete}
\vspace{-0.04in}
\end{figure}

\section{Automated Prompt Optimization}
\label{sec:prompt-optimization}
In this second prompt experiment, the model never asks the user follow-up questions, but rather treats the choice
of system prompt as an online decision problem under a fixed token budget, aiming to quickly
identify which prompt variant yields the most reliable multiple-choice answers. We fix a small
library $P$ of candidate system prompts: $\mathrm{letter\_only}$, $\mathrm{short\_reasoning}$,
$\mathrm{eliminate\_two}$, $\mathrm{keyword\_match}$, $\mathrm{units\_and\_scales}$, and
$\mathrm{contrastive\_explanations}$. At training step $t$ we select a prompt $p_t\in P$, format one
ARC-Challenge question \cite{clark2018think} using a fixed user-message template, and query the base
model once with \texttt{temperature=0} so the output is deterministic given the prompt. From the
response we extract the first standalone letter in $\{\texttt{A},\texttt{B},\texttt{C},\texttt{D}\}$,
score it as correct or incorrect to obtain $y_t\in\{0,1\}$, and record the token cost $\tau_t$.

Each prompt $p$ is modeled as a Bernoulli arm with unknown accuracy $\theta_p$, and we maintain
independent Beta posteriors $\theta_p\sim\mathrm{Beta}(\alpha_p,\beta_p)$ initialized at
$\alpha_p=\beta_p=1$. After observing $y_t$ for the chosen prompt $p_t$, we update only that prompt
via $\alpha_{p_t}\leftarrow\alpha_{p_t}+y_t$ and $\beta_{p_t}\leftarrow\beta_{p_t}+1-y_t$, and
report the posterior mean $\mathbb{E}[\theta_p]=\alpha_p/(\alpha_p+\beta_p)$.

To summarize uncertainty about which prompt is truly best, we define
$P^\star=\arg\max_p \theta_p$ and estimate the induced distribution over $P^\star$ by Monte Carlo
sampling, where we draw one $\theta_p$ from each Beta factor and take the maximizing prompt for each
draw. From this empirical distribution we compute an entropy $H(P^\star)$ in bits, and after each
training step we log the realized entropy drop
$\Delta H_t=\max\{H_{\text{before}}-H_{\text{after}},0\}$ along with an information-efficiency
measure \(\mathrm{IE}_{1\mathrm{k}}(t)=1000\,\Delta H_t/\tau_t,\)
which can be read as the entropy reduction achieved per thousand tokens spent on that step. 

Prompt selection is handled by an outer-loop policy that chooses $p_t$ each step. Round robin cycles through prompts, Thompson sampling draws $\tilde\theta_p$ from each posterior and picks the largest, KG chooses the prompt with the highest expected one-step gain in the current best posterior-mean accuracy, MI chooses the prompt with the largest expected reduction in $H(P^\star)$, and EFE blends MI and KG with a linearly decaying epistemic weight to shift from exploration to exploitation. Since prompts can differ in response length, MI, KG, and EFE normalize their scores by a per-prompt
token-cost estimate tracked with an exponential moving average of $\tau_t$. Training stops at 400 questions or $B=120{,}000$ tokens. In \Cref{fig:promptopt} the 400-question cap binds (about $5\times10^4$ tokens), after which we select the prompt with the highest posterior mean and evaluate
it on a shared validation subset. KG tends to concentrate queries early, leaving rarely tested prompts near the 0.5 prior mean, while MI and EFE are often most informative early and Thompson sampling and round robin spread learning more evenly.

\begin{figure}[!htbp]
\centering
\subfigure[Posterior mean accuracies]{%
  \IfFileExists{figs/promptopt_posteriors.png}{%
    \includegraphics[width=0.48\linewidth]{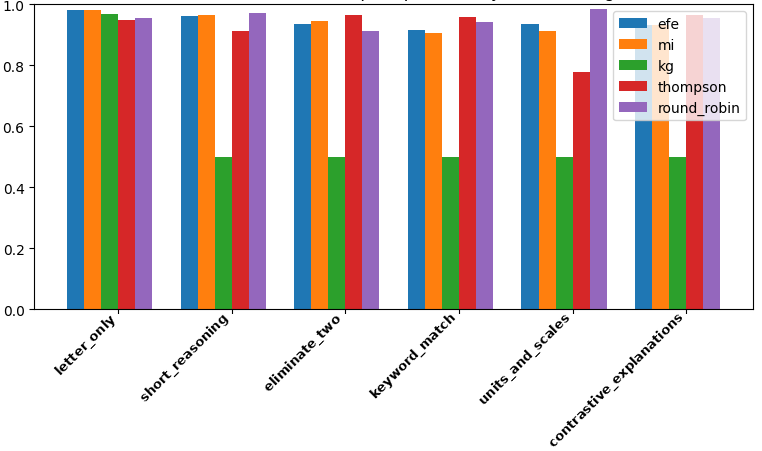}%
  }{%
    \fbox{\parbox[c][0.55in][c]{0.45\linewidth}{\centering\small
    Placeholder for posterior accuracy bar plot.}}%
  }}\hfill
\subfigure[Information gain per 1k tokens]{%
  \IfFileExists{figs/promptopt_ie_updated.png}{%
    \includegraphics[width=0.48\linewidth]{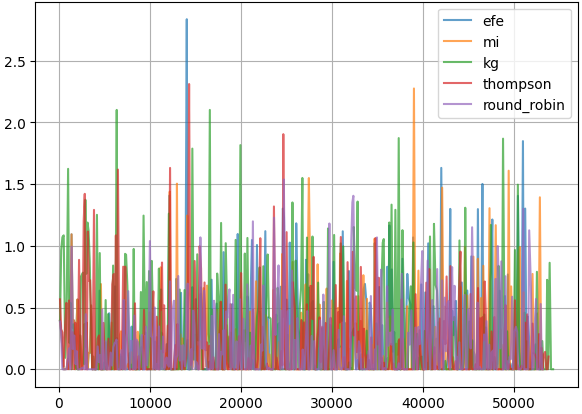}%
  }{%
    \fbox{\parbox[c][0.55in][c]{0.45\linewidth}{\centering\small
    Placeholder for information efficiency plot.}}%
  }}
\vspace{0.04in}

\subfigure[Entropy of best-prompt identity]{%
  \IfFileExists{figs/prompt_opt_uncertainity.png}{%
    \includegraphics[width=0.68\linewidth]{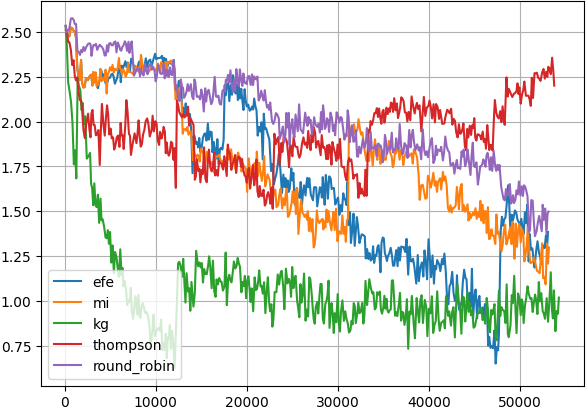}%
  }{%
    \fbox{\parbox[c][0.55in][c]{0.65\linewidth}{\centering\small
    Placeholder for entropy over best prompt.}}%
  }}
\vspace{-0.03in}
\caption{\small Automated prompt optimization under a token budget. Panels show final posterior mean accuracies, realized information gain per 1{,}000 tokens, and posterior entropy over the identity of the best prompt. Policies that reduce uncertainty early can avoid committing too soon.}
\label{fig:promptopt}
\vspace{-0.04in}
\end{figure}

\textbf{Summary and takeaways.} We treat system-prompt choice as an online decision problem under a fixed token budget, using repeated, automatically scored trials to update which prompt variant looks best. The main lesson is to use early trials to reduce uncertainty rather than commit too soon, then choose the variant with the strongest track record under the same scoring rule used throughout training.

\section{Limitations and Discussion}
\label{sec:limitations}
This paper intentionally studies restricted forms of context acquisition. In OQA, the AI assistant must identify a hidden item by asking about entries in a fixed attribute table. We test both yes--no tables and tables where answers take a small set of named values (Sections~\ref{sec:binary-oqa} and \ref{sec:kary-oqa}). The advantage is precision: after each question, we can say exactly how many items remain valid, and we can compare the model to a best-possible strategy under the same rules. The cost is realism. Real conversations may include vague replies, new constraints, contradictory preferences, missing files, and evidence that does not fit a prewritten attribute list. A stricter future benchmark would allow free-form questions and ground answers in richer inputs, including images and compositional attributes, for example, CLEVR-style scenes \citep{johnson2017clevr}.

The current experiments cover three real attribute tables (Places, Cars, Animals), larger synthetic multiway attribute tables up to 300 items, and two token-budget studies on how to spend limited tokens (Sections~\ref{sec:prompt-autocomplete} and \ref{sec:prompt-optimization}). This is enough to test the central accounting principle, but not enough to claim a complete agent benchmark. We leave out temporal reasoning, multimodal perception, robotics, open-ended tool-use, and multi-user collaboration, all of which make context acquisition harder \citep{mirza2016scene, lanillos2021active}.

We also turn off tools on purpose. We disable retrieval, function calls, and scratchpads so we can measure what the model can do on its own \citep{lewis2020rag,schick2023toolformer,nye2021show}. In real products, tools do a lot of the heavy lifting. They can store the candidate list, filter it exactly, keep memory across turns, and call external systems. A tool-enabled version of the benchmark would test whether the agent asks for the right missing information before invoking a tool, instead of hallucinating missing arguments or making premature calls.

There is also a scaling issue with the oracle we use for comparison. Our dynamic program gives the best strategy under our rules, but as the number of items grows, the number of different remaining sets that can appear may explode. It is manageable for the tiers we release, but it will not stay manageable forever. For larger tasks, approximate baselines based on sampling, search, or learned lookahead will be needed \citep{kirsch2019batchbald,schrittwieser2020mastering}. More broadly, a locally good question is not always part of the best overall plan. This is why we report a DP oracle where possible rather than treating greedy information gain as the final answer.

The two token-budget studies are controlled demonstrations, not full replicas of real workflows. In prompt autocompletion (Section~\ref{sec:prompt-autocomplete}), the assistant chooses among fixed clarifications, and answers are simulated from a hidden style choice. This isolates whether the assistant asks about the right variables, but it does not test natural phrasing, unclear answers, user impatience, or changing preferences. In automated prompt optimization (Section~\ref{sec:prompt-optimization}), we pick from a small handwritten set of prompt variants and test them on a multiple-choice benchmark. This is clean and automatically scored, but real prompt libraries are larger, correlated, task-dependent, and often judged by softer preferences. Token costs can also change across deployments. Since we evaluate frontier models through hosted APIs, model updates can affect reproducibility.

Finally, efficient context acquisition is dual use. The same machinery that helps an assistant ask fewer and better questions can help a malicious actor collect sensitive details, profile users, or probe hidden system rules. We discuss these connections in the appendix, but we do not test attacks or defenses. Deployment should pair information-gain objectives with limits on sensitive follow-ups, rate limits, audit trails, consent for personal data collection, and privacy-aware safeguards \citep{huang2024lifting,charles2024fine,freiberger2025you}.

\section{Conclusion and Future Work}
\label{sec:conclusion}

We presented active inference as a context-acquisition layer for AI agents. The central decision is whether to act with the current context or spend tokens, interaction, or computation to acquire more evidence. The bilevel formulation separates the posterior update caused by a hypothetical observation from the outer decision that chooses the next context action or task action. OQA makes this decision measurable through exact posteriors and a dynamic programming oracle. Across tiers, frontier models reduce uncertainty each turn but still ask more questions than the oracle. The two token-budget studies show the same principle in prompting: a small amount of targeted clarification or prompt experimentation can improve final success when it buys decision-relevant information.

Future work should move beyond fixed tables to noisy, open-ended, and multimodal interaction. It should also scale oracle baselines with sampling-based search, evaluate tool-enabled agents with memory, and integrate explicit stopping decisions. A central open direction is to make context acquisition safe by design: the agent should learn what it needs to know, but not overcollect information that is irrelevant, sensitive, or unsafe to request.

\section*{Impact Statement}

This paper advances machine learning methods for deciding when an interactive system should acquire more context versus act, using active inference, information gain, and controlled oracle diagnostics. Potential benefits include fewer unnecessary questioning turns, lower token and compute cost, better tool-use reliability, and improved task success under ambiguity.

Risks are that more efficient context acquisition can be misused to extract sensitive information, enable profiling, or speed up adversarial probing of system defenses. OQA itself uses synthetic data and simulated answers, so it does not require collecting personal data. In deployment, mitigation includes limiting and auditing follow-ups, obtaining consent for personal data collection, and adding explicit privacy and safety constraints to the objective.

\FloatBarrier
\bibliographystyle{plainnat}
\bibliography{custom}


\appendix

\section{Optimal Question Asking: API Transcripts and Prompt Templates}\label{app:oqa-api}

OQA is scored from the transcript alone. At each turn, the model asks about exactly one named attribute. The oracle returns the answer by looking it up in a finite table of attributes. After the reply, a scoring script filters the remaining set $C_t$ by exact consistency. Because every step is deterministic, the score is fully reproducible. With the same prompt and the same hidden target, replaying the same message sequence yields the same $C_t$, the same entropy $H_t=\log_2|C_t|$, and the same stopping time.

The protocol enforces two rules. Every question must use an attribute name from a fixed list. Each turn may refer to only one attribute. In binary OQA, the oracle replies \texttt{"Yes"} or \texttt{"No"}, and the update is $C_{t+1}=\{x\in C_t \mid z_j(x)=o_t\}$. In multiway OQA, the oracle replies with a single value string, and the update is $C_{t+1}=\{x\in C_t \mid a_t(x)=o_t\}$. A run ends when $|C_t|=1$. It can also end earlier when the remaining items share the same vector of attributes, so no allowed query can split them further. For reliable parsing, a strict JSON format is available. The next query is read from the \texttt{"question"} field, while any extra fields are ignored and may be used for debugging.

\begin{oqacode}{orange}{Binary OQA user prompt template (plain-text mode)}
This is a <DATASET> attributes dataset. I have a hidden <OBJECT TYPE> in mind.

Ask only yes/no questions about exactly one attribute at a time, chosen from:
<ATTRIBUTE_1>, <ATTRIBUTE_2>, ..., <ATTRIBUTE_d>.

After each answer, do the following:
1) Show your current belief over the remaining <OBJECT TYPE>s.
2) Report how many candidates are left.
3) If you can estimate entropy mentally, include it.

Do not run code or use tools.

Stop when only one candidate remains, or when multiple items share an identical
attribute vector and cannot be separated.
\end{oqacode}

\begin{oqafig}{blue}{Binary OQA placeholder instantiations (domain families)}
\small
\centering
\begin{tabular}{lll}
\textbf{Domain family} & \textbf{\texttt{<DATASET>}} & \textbf{\texttt{<OBJECT TYPE>}} \\
\hline
Animals & \texttt{ANIMALS} & \texttt{animal} \\
Cars    & \texttt{CARS}    & \texttt{car} \\
Places  & \texttt{PLACES}  & \texttt{place} \\
\end{tabular}

\vspace{0.4em}
\noindent\small The concrete attribute list \texttt{<ATTRIBUTE\_1..d>} is provided in each dataset's
\texttt{data/attributes.txt}.
\end{oqafig}

\begin{oqafig}{teal}{Binary OQA sample transcript (2 turns shown)}
\small
\textbf{System.} You are a careful planner.\par
\textbf{User.} This is a \texttt{PLACES} attributes dataset. I have a hidden place in mind.
Ask only yes/no questions about one attribute at a time from the provided list.\par

\medskip
\textbf{Assistant (Turn 1).} \emph{Question:} Is it coastal?\par
\emph{Belief (before the answer):} uniform over all 25 places.\par
\emph{Uncertainty:} $H_0=\log_2 25 \approx 4.64$ bits. \emph{Stop:} no.\par

\smallskip
\textbf{User.} No.\par

\smallskip
\textbf{Assistant (Turn 2).} \emph{Update:} filter to candidates with \texttt{coastal = false}.\par
\emph{Remaining candidates:} 9 places (uniform over the remaining set).\par
\emph{Uncertainty:} $H_1=\log_2 9 \approx 3.17$ bits.\par
\emph{Next question:} Is it mountainous?\par

\smallskip
\textbf{User.} Yes.\par
\end{oqafig}

\begin{oqacode}{purple}{Multiway ($k$-ary) user prompt template (plain-text mode)}
This is a $k$-ary attributes dataset with <N> abstract objects. I have a hidden item in mind.

Ask questions about exactly one attribute at a time, chosen from this list:
color, shape, material, size, pattern, origin, use_case, energy.

The answer will be a single value for that attribute, for example "red" or "triangle".

After each answer, do the following:
1) Show your current belief over the remaining items.
2) Report how many candidates are left.
3) If you can estimate entropy mentally, include it.

Do not run code or use tools.

Stop when only one candidate remains, or when multiple items share an identical
attribute vector and cannot be separated.
\end{oqacode}

\begin{oqafig}{red}{Multiway ($k$-ary) sample transcript (2 turns shown)}
\small
\textbf{System.} You are a careful planner.\par
\textbf{User.} This is a $k$-ary attributes dataset with 200 abstract objects. Ask about exactly
one attribute per turn from: \texttt{color, shape, material, size, pattern, origin, use\_case, energy}.
Each answer will be a single value (e.g., \texttt{"red"}).\par

\medskip
\textbf{Assistant (Turn 1).} \emph{Question:} What is the \texttt{energy}?\par
\emph{Belief (before the answer):} uniform over all 200 items.\par
\emph{Uncertainty:} $H_0=\log_2 200 \approx 7.64$ bits. \emph{Stop:} no.\par

\smallskip
\textbf{User.} electric\par

\smallskip
\textbf{Assistant (Turn 2).} \emph{Update:} filter to candidates with \texttt{energy = electric}.\par
\emph{Remaining candidates:} (uniform over the filtered set).\par
\emph{Uncertainty:} $H_1 \approx 3.12$ bits.\par
\emph{Next question:} What is the \texttt{shape}?\par

\smallskip
\textbf{User.} hexagon\par
\end{oqafig}

\section{Exact Dynamic Programming Oracle}\label{app:oracle-dp}

The oracle curves in our figures are produced by an exact planner that minimizes the expected number of questions required to identify the hidden target under a uniform prior and deterministic answers. In these tiers, each reply removes every candidate that is inconsistent with it, so the posterior after any transcript is uniform over the remaining candidates. The belief state is therefore completely determined by the current candidate set $C\subseteq\mathcal{X}$. Asking one attribute partitions $C$ into subsets indexed by the possible replies, and the next belief is obtained by restricting to the subset that matches the observed reply. This reduces optimal question asking to a finite decision problem over sets, with termination when $|C|\le 1$ or when the allowed attributes cannot further separate the remaining items.

Let $C$ be the current candidate set. A query $q\in\mathcal{Q}$ induces a set of possible replies $\mathcal{R}(q,C)$ and corresponding next candidate sets $\{C_r\}_{r\in\mathcal{R}(q,C)}$, where
$C_r=\{x\in C \mid q(x)=r\}$.
Under a uniform posterior on $C$ and deterministic replies, $\Pr[r]=|C_r|/|C|$. Define $\mathrm{Cost}(C)$ as the optimal expected number of additional questions until termination, starting from $C$. A set is terminal when $|C|\le 1$ or when every query fails to split $C$ into at least two nonempty subsets. Bellman optimality gives the recursion
\begin{equation}
\mathrm{Cost}(C)=
\min_{\substack{q\in\mathcal{Q}\\|\mathcal{R}(q,C)|\ge 2}}
\left(
1+\sum_{r\in\mathcal{R}(q,C)}\frac{|C_r|}{|C|}\,\mathrm{Cost}(C_r)
\right),
\qquad
\mathrm{Cost}(C)=0\ \text{if $C$ is terminal}.
\label{eq:oqa-dp-general}
\end{equation}
The oracle policy at state $C$ is the minimizer in \eqref{eq:oqa-dp-general}. Greedy maximization of one step expected information gain optimizes only the immediate entropy drop, while \eqref{eq:oqa-dp-general} is optimal for the expected remaining number of questions.

In the binary tiers, each query is an index $j\in[d]$ with replies in $\{0,1\}$. Writing
$C^{\text{yes}}=\{x\in C \mid z_j(x)=1\}$ and $C^{\text{no}}=C\setminus C^{\text{yes}}$, the recursion becomes
\begin{equation}
\mathrm{Cost}(C)=
\min_{\substack{j\in[d]\\ C^{\text{yes}}\neq\emptyset\\ C^{\text{no}}\neq\emptyset}}
\left(
1+\frac{|C^{\text{yes}}|}{|C|}\mathrm{Cost}(C^{\text{yes}})
+\frac{|C^{\text{no}}|}{|C|}\mathrm{Cost}(C^{\text{no}})
\right),
\label{eq:oqa-dp-binary}
\end{equation}
with $\mathrm{Cost}(C)=0$ when $|C|\le 1$ or when no attribute splits $C$. This is the dynamic program used to construct the optimal expected depth decision tree for the binary oracle.

In the multiway categorical tiers, a query is an attribute $a\in\mathcal{A}$ whose reply is a single value $v\in\mathcal{V}_a$. For a candidate set $C$ define the branches
$C_v=\{x\in C \mid a(x)=v\}$,
and ignore values with $|C_v|=0$. If there is only one nonempty branch, then $a$ does not split $C$ and is not considered. Otherwise,
\begin{equation}
\mathrm{Cost}(C)=
\min_{\substack{a\in\mathcal{A}\\ |\{v\in\mathcal{V}_a \mid |C_v|>0\}|\ge 2}}
\left(
1+\sum_{v\in\mathcal{V}_a:\,|C_v|>0}\frac{|C_v|}{|C|}\,\mathrm{Cost}(C_v)
\right),
\label{eq:oqa-dp-kary}
\end{equation}
with the same terminal condition as above.

The recursion is evaluated over candidate sets. In practice, repeated subproblems can be avoided by caching $\mathrm{Cost}(C)$ keyed by a canonical representation of $C$, such as a sorted tuple of item identifiers. A fixed attribute order yields a reproducible oracle policy when several attributes tie.

To generate the oracle entropy traces used in plots, the optimal policy is rolled out for each fixed target $x^\star$. Starting from $C_0=\mathcal{X}$, at step $t$ the oracle selects an optimal query for $C_t$, reads the target's true reply, filters to $C_{t+1}$, and records $H_t=\log_2|C_t|$, with $t=0$ corresponding to the prior. The trajectory ends when the set is terminal, so tiers with duplicates can plateau at $H_t>0$.

\section{Scaling Up OQA Arbitrarily With Synthetic Attribute Tables}
\label{app:oqa-synthetic-scaling}

The real object tables used in OQA are small and fixed. For larger studies,
OQA also supports synthetic attribute tables that can be regenerated from a random seed at arbitrary
scale. A synthetic tier is specified by an attribute schema, a candidate count $N$, and a seed. The
generator writes the same artifacts as the released tiers. It outputs a JSON table of item attributes,
an \texttt{attributes.txt} file for binary tiers or a schema description for categorical tiers, an
\texttt{items.txt} list of identifiers, and an \texttt{equivalence\_classes.json} file that groups items
with identical attribute vectors when duplicates are present.

This setup serves two purposes. It enables scale, since $N$ can increase without changing the
evaluation code or the transcript rules. It also reduces semantic assistance. Item identifiers and
attribute names can be arbitrary strings, such as hexadecimal keys and short tokens, so performance
reflects query choice and consistent set updates rather than world knowledge.

In the binary synthetic tiers, fix $N$ and choose a Boolean dimension $d$ such that $2^d \ge N$.
This guarantees that at least $N$ distinct Boolean vectors exist. A simple construction samples $N$
distinct vectors in $\{0,1\}^d$, assigns each vector to a fresh identifier, and records the resulting
table. When $d$ is close to $\lceil \log_2 N \rceil$, the instance lies near the information limit,
since an ideal policy needs on the order of $\log_2 N$ questions. Taking $d$ larger adds redundancy,
which tests whether an agent can avoid attributes that do not help separate the remaining candidates.
Controlled ambiguity can be introduced by allowing duplicates, so the stopping rule ends on a
nontrivial equivalence class.

In the multiway synthetic tiers, choose a finite attribute set $\mathcal{A}$ with value sets
$\{\mathcal{V}_a\}_{a\in\mathcal{A}}$ such that $\prod_{a\in\mathcal{A}} |\mathcal{V}_a| \ge N$.
This ensures that at least $N$ distinct categorical vectors exist. Sample $N$ distinct vectors from
the Cartesian product and map each to a fresh identifier. Allowing duplicates creates plateaus where
several items share the same attribute vector. Difficulty is controlled by the branching factors
$|\mathcal{V}_a|$, by how balanced the value frequencies are, and by correlations among attributes.

For semantic domains such as Places, Cars, and Animals, listing the allowed attribute names is often
enough, since background knowledge can guide reasonable query orders. For fully synthetic domains
the names carry no meaning, so a planning test typically requires providing the table, or an
equivalent compact encoding, in the prompt. Two convenient encodings are \texttt{id:bitstring} for
binary tiers and \texttt{id:v1,v2,...} for $k$-ary tiers. Both grow linearly with $N$ and remain easy
to parse in long context settings.

The transcript-based metrics, namely $C_t$ and $H_t=\log_2|C_t|$, remain exact for any $N$ because
they rely only on deterministic filtering. Exact dynamic programming does not scale in the same way.
Its states are candidate subsets, and the number of reachable subsets can grow very quickly as $N$
increases. For larger tiers, greedy information gain, shallow lookahead, or sampling-based planners
provide strong approximate baselines, while the benchmark definition remains unchanged.

\begin{oqacode}{blue}{Deterministic synthetic OQA generator (binary and $k$-ary, reference implementation sketch)}
import json
import math
import random
from collections import defaultdict

def write_list(path, xs):
    with open(path, "w") as f:
        for x in xs:
            f.write(str(x) + "\n")

def equivalence_classes(table, attrs):
    # Group items that share the same attribute vector (binary or categorical).
    groups = defaultdict(list)
    for item_id, row in table.items():
        key = tuple(row[a] for a in attrs)
        groups[key].append(item_id)

    # Store only non-singletons (optional; keep all if preferred).
    out = {}
    for i, ids in enumerate(groups.values()):
        if len(ids) >= 2:
            out[f"class_{i:04d}"] = ids
    return out

def make_binary_oqa(
    N, d=None, seed=0, allow_duplicates=False, id_width=6
):
    rng = random.Random(seed)
    if d is None:
        # Slight margin reduces collisions when sampling without replacement.
        d = math.ceil(math.log2(max(N, 2))) + 2
    attrs = [f"a{i}" for i in range(d)]

    seen = set()
    table = {}
    while len(table) < N:
        bits = tuple(rng.getrandbits(1) for _ in range(d))
        if (not allow_duplicates) and (bits in seen):
            continue
        seen.add(bits)

        item_id = f"{len(table):0{id_width}x}"
        table[item_id] = {a: bool(b) for a, b in zip(attrs, bits)}

    return table, attrs

def make_kary_oqa(
    N, schema, seed=0, allow_duplicates=False, id_width=6
):
    # schema: dict attr -> list of allowed values (strings).
    rng = random.Random(seed)
    attrs = list(schema.keys())

    seen = set()
    table = {}
    while len(table) < N:
        vec = tuple(rng.choice(schema[a]) for a in attrs)
        if (not allow_duplicates) and (vec in seen):
            continue
        seen.add(vec)

        item_id = f"{len(table):0{id_width}x}"
        table[item_id] = {a: v for a, v in zip(attrs, vec)}

    return table, attrs

# Example usage:
# Binary: N=5000, d chosen automatically.
# table, attrs = make_binary_oqa(N=5000, seed=7, allow_duplicates=False)
# json.dump(table, open("data/synth_binary.json", "w"), indent=2)
# write_list("data/attributes.txt", attrs)
# write_list("data/items.txt", table.keys())
# json.dump(
#     equivalence_classes(table, attrs),
#     open("data/equivalence_classes.json", "w"),
#     indent=2,
# )

# Multiway: choose branching factors through schema.
# schema = {
#     "a0": ["v0", "v1", "v2", "v3", "v4"],
#     "a1": ["v0", "v1", "v2", "v3", "v4"],
#     "a2": ["v0", "v1", "v2", "v3"],
#     "a3": ["v0", "v1", "v2", "v3"],
# }
# table, attrs = make_kary_oqa(
#     N=20000, schema=schema, seed=11, allow_duplicates=False
# )
\end{oqacode}


\section{Prompt Autocompletion Experiment: Full Specification}
\label{app:prompt-autocomplete-spec}

Each run in the prompt autocompletion experiment has an unknown style variable, a short clarification phase with fixed question text, a single model call that produces the final description, and a deterministic verifier that scores the result. The policy controls only which style attribute to query next and when to stop.

Each task provides a product name and three feature phrases. The latent user intent is
\[
U=(\text{tone},\text{length},\text{format}),
\]
where tone, length, and format each have two listed values. The prior over $U$ is uniform and factorized. A policy may ask up to $K_{\max}\le 3$ clarification questions. Each clarification reveals the true value of exactly one component of $U$ through a simulated answer.

During belief updates, the implementation models the answer channel with symmetric label noise $\varepsilon=0.12$. After the policy stops, a single style estimate $\hat U$ is chosen by a MAP rule under the current belief. Ties are broken by \texttt{argmax} returning the first listed label in each attribute's ordering: \texttt{formal}, \texttt{short}, \texttt{bullets}. The final description is then generated with exactly one model call conditioned on $\hat U$. The run is scored only by the verifier described below.

The text of each clarification question is fixed, and only the choice of which attribute to ask is controlled by the policy:
\begin{itemize}\itemsep2pt
\item \texttt{tone}\quad \texttt{Before I write, do you prefer a formal or friendly tone?}
\item \texttt{length}\quad \texttt{Do you want a short or a medium length description?}
\item \texttt{format}\quad \texttt{Should I present it as bullet points or as a paragraph?}
\end{itemize}

Clarification turns are charged a fixed synthetic schedule that approximates the growth of context,
\[
\texttt{clarify\_cost}(k)\in\{24,48,72\}
\quad\text{for the $k$th clarification, with $k$ starting at 1.}
\]
Total tokens per run are computed as
\[
\texttt{tokens\_total}
=
\sum_{k=1}^{K}\texttt{clarify\_cost}(k)
+\texttt{tokens\_api},
\]
where \texttt{tokens\_api} is the model reported total token usage for the single generation call.

The final call sends a single text prompt to the Responses API, using the model \texttt{gpt-4o}. The prompt is formatted with two labeled sections, \texttt{[System]} and \texttt{[User]}, as shown below. The \texttt{[System]} section is constant. The \texttt{[User]} section is formed by inserting the task fields and the MAP style $\hat U$ into the template below.
\begin{lstlisting}[style=oqalist]
[System]
You are a concise marketing writer.
Follow tone, format, and length exactly.
Reuse the product features as is.

[User]
Product: <TASK.NAME>
Features: <TASK.FEATURE_1>, <TASK.FEATURE_2>, <TASK.FEATURE_3>
Tone: <formal|friendly>
Format: <bullets|paragraph>
Length: <short|medium>
Write the description now.
\end{lstlisting}

Given the generated text \texttt{out}, the task feature strings, and the latent $U$, the output is accepted if and only if all three conditions hold. First, every feature phrase appears literally as a substring of \texttt{out} after both are lowercased, so the check is \texttt{f.lower() in out.lower()} for each feature string \texttt{f}. Second, the format matches $U.\texttt{format}$. If $U.\texttt{format}=\texttt{bullets}$, \texttt{out} must contain at least two non empty lines whose first non whitespace character is \texttt{-}, \texttt{*}, or \texttt{\textbackslash item}. If $U.\texttt{format}=\texttt{paragraph}$, the output is rejected when it contains obvious bullet starts, which is tested by checking whether \texttt{out} contains \texttt{\textbackslash n-}, \texttt{\textbackslash n*}, or \texttt{\textbackslash n\textbackslash item}. Third, the length matches $U.\texttt{length}$. Let \texttt{wc} be the number of tokens in \texttt{out} when splitting on whitespace. If $U.\texttt{length}=\texttt{short}$, require $\texttt{wc}\le 70$. If $U.\texttt{length}=\texttt{medium}$, require $70\le \texttt{wc}\le 160$.

One run can be summarized compactly as follows. The product is Stainless Steel Bottle, with features insulated, 24 oz, and BPA free. The latent style is $U=\{\texttt{friendly},\texttt{short},\texttt{paragraph}\}$. A policy that queries all three attributes receives those three simulated answers, so $\hat U=U$. The model is then called once with the template above instantiated by these fields. The verifier returns \texttt{PASS} if the output contains all three feature substrings, contains no bullet lines, and has word count at most $70$.

\section{Prompt Optimization Experiment: Full Specification}
\label{app:promptopt-protocol}

Prompt optimization is treated as a small Bayesian bandit problem over a fixed set of prompt variants on ARC-Challenge multiple-choice questions. Each training step selects one prompt variant, makes exactly one model call at \texttt{temperature=0}, parses a single answer letter, and logs both correctness and the total token count returned by the API. Learning occurs only in the bandit state. The language model itself is not updated. The base model used in all calls is \texttt{gpt-4o}.

We load ARC-Challenge from the Hugging Face AI2 ARC dataset with the \texttt{ARC-Challenge} configuration. We filter both the training and validation splits to keep only examples with exactly four answer choices labeled \texttt{A}, \texttt{B}, \texttt{C}, and \texttt{D}. We reorder the choice texts into that order and drop any example whose \texttt{answerKey} is not one of those four labels.

Fix the set of prompt variants
\[
\begin{aligned}
P=\{&\texttt{letter\_only},\texttt{short\_reasoning},\\
&\texttt{eliminate\_two},\texttt{keyword\_match},\\
&\texttt{units\_and\_scales},\texttt{contrastive\_explanations}\}.
\end{aligned}
\]
Each $p\in P$ corresponds to a fixed system prompt. The user message has a fixed template shared across all variants. At step $t$, the policy chooses $p_t\in P$, formats one ARC example into the user template, prepends the system prompt for $p_t$, and calls the base model once. The API-reported total token usage for that call is denoted by $\tau_t$.

\begin{oqacode}{blue}{User message template shared by all prompts}
Question:
<QUESTION_TEXT>

Options:
A) <OPTION_A>
B) <OPTION_B>
C) <OPTION_C>
D) <OPTION_D>

Answer with only the single letter A, B, C, or D.
\end{oqacode}

\begin{oqacode}{teal}{System prompts for the six variants}
letter_only
You answer multiple choice science exam questions.
Read the question and options carefully and choose the single best answer.
Reply with only the option letter (A, B, C, or D). Do not include any text besides the letter.

short_reasoning
You are an expert science tutor taking a multiple choice test.
Privately reason step by step, but reply with only the single best option letter.
Never include the reasoning in your reply.

eliminate_two
You answer multiple choice science questions by eliminating clearly wrong options first.
Mentally eliminate at least two options and then pick the best remaining one.
Reply with only the option letter (A, B, C, or D).

keyword_match
You are a careful science exam solver.
Match key concepts and entities in the question to the options, ignore distractors, and pick the best match.
Reply with only the option letter (A, B, C, or D).

units_and_scales
You are a science exam assistant.
Pay special attention to units, scales, and quantitative relationships when choosing among the options.
Reply with only the option letter (A, B, C, or D).

contrastive_explanations
You answer science questions by comparing how each option would explain the question.
Mentally contrast the options and pick the one that best explains the described situation.
Reply with only the option letter (A, B, C, or D).
\end{oqacode}

A deterministic checker maps the raw model output at step $t$ to a binary outcome. It extracts the first standalone letter in $\{\texttt{A},\texttt{B},\texttt{C},\texttt{D}\}$ using a case-insensitive word-boundary match, for example the regex \texttt{\textbackslash b([ABCD])\textbackslash b}. If no such letter is found, the step is marked incorrect. Otherwise the step outcome is $y_t=1$ if the parsed letter equals the gold \texttt{answerKey}, and $y_t=0$ otherwise.

All policies train on the same fixed subset of at most $400$ ARC-Challenge training questions, sampled without replacement using a fixed random seed. Training stops when this subset is exhausted or when cumulative training tokens reach or exceed a budget $B$, with the reference setting $B=120{,}000$. (Because the budget check happens after each model call, the final total can exceed $B$ by at most one call.) Evaluation uses a shared holdout subset of $200$ ARC-Challenge validation questions, also sampled once with a fixed seed. Policy comparisons therefore differ only through the sequence of prompt variants selected during training.

Each prompt $p\in P$ is treated as an arm with an unknown Bernoulli accuracy $\theta_p$. Independent Beta posteriors are maintained,
\[
q_t(\theta_p)=\mathrm{Beta}(\alpha_p,\beta_p),
\qquad
\alpha_p=\beta_p=1 \ \text{at initialization}.
\]
After testing $p_t$ and observing $y_t\in\{0,1\}$, the conjugate update is
\[
\alpha_{p_t}\leftarrow \alpha_{p_t}+y_t,
\qquad
\beta_{p_t}\leftarrow \beta_{p_t}+(1-y_t),
\]
and the posterior mean accuracy is
\[
m_p \equiv \mathbb{E}[\theta_p]=\frac{\alpha_p}{\alpha_p+\beta_p}.
\]

Token cost affects only acquisition. Each arm maintains an exponential moving average estimate $\hat c_p$ of its token cost, initialized at $\hat c_p=120$ for all prompts. After an evaluation of $p_t$ with observed cost $\tau_t$, the update is
\[
\hat c_{p_t}\leftarrow \gamma\,\hat c_{p_t} + (1-\gamma)\,\tau_t,
\qquad
\gamma=0.9.
\]

Let $P^\star=\arg\max_{p\in P}\theta_p$ denote the identity of the best prompt under the latent accuracies. Its posterior is approximated by Monte Carlo sampling. Draw $\tilde\theta_p^{(s)}\sim \mathrm{Beta}(\alpha_p,\beta_p)$ independently for $s=1,\dots,S$ and all $p\in P$, with the reference choice $S=600$ during training. Compute $\tilde P^{\star(s)}=\arg\max_p \tilde\theta_p^{(s)}$ and form the empirical distribution
\[
\hat \pi_t(p)=\frac{1}{S}\sum_{s=1}^{S}\mathbf{1}\{\tilde P^{\star(s)}=p\}.
\]
Uncertainty is summarized by the entropy in bits
\[
H_t \equiv H(P^\star)\approx -\sum_{p\in P}\hat \pi_t(p)\log_2 \hat \pi_t(p).
\]
For plotting, the information efficiency of step $t$ is
\[
\mathrm{IE}_{1\mathrm{k}}(t)=1000\cdot \frac{\Delta H_t}{\max\{\tau_t,1\}},
\qquad
\Delta H_t=\max\{H_t-H_{t+1},0\},
\]
where $H_t$ is computed before the Beta update and $H_{t+1}$ is computed after the update.

At each step, the policy chooses one prompt $p_t$, performs one model call, then updates $(\alpha,\beta)$ and $\hat c$. Under the mutual information rule, the expected next entropy after one more labeled sample on $p$ is approximated by
\[
\mathbb{E}[H_{t+1}\mid p]
\approx
m_p\,H(P^\star\mid \alpha_p{+}1,\beta_p)
+
(1-m_p)\,H(P^\star\mid \alpha_p,\beta_p{+}1),
\]
where each entropy term is recomputed by the same Monte Carlo procedure after the hypothetical update on $p$ only. The acquisition score divides expected entropy reduction by estimated token cost,
\[
u_t^{\mathrm{MI}}(p)=\frac{\max\{H_t-\mathbb{E}[H_{t+1}\mid p],0\}}{\max\{\hat c_p,1\}},
\qquad
p_t=\arg\max_{p\in P} u_t^{\mathrm{MI}}(p).
\]

Under the knowledge gradient rule, let $m_{\max}=\max_{j\in P} m_j$ be the current best posterior mean. If prompt $p$ were sampled and succeeded, its mean would become $(\alpha_p{+}1)/(\alpha_p{+}\beta_p{+}1)$. If it failed, its mean would become $\alpha_p/(\alpha_p{+}\beta_p{+}1)$. Let $m_{\max}^{\mathrm{succ}}(p)$ and $m_{\max}^{\mathrm{fail}}(p)$ be the resulting best means across prompts under these two hypothetical updates. Define
\begin{align}
\mathrm{KG}_t(p)
&=
m_p\,\max\{m_{\max}^{\mathrm{succ}}(p)-m_{\max},0\}
+(1-m_p)\,\max\{m_{\max}^{\mathrm{fail}}(p)-m_{\max},0\},\nonumber\\
u_t^{\mathrm{KG}}(p)
&=\frac{\mathrm{KG}_t(p)}{\max\{\hat c_p,1\}},
\end{align}
and select $p_t=\arg\max_{p\in P} u_t^{\mathrm{KG}}(p)$.

A mixed rule combines the epistemic score $u_t^{\mathrm{MI}}$ and the pragmatic score $u_t^{\mathrm{KG}}$,
\[
s_t(p)=\lambda_t\,u_t^{\mathrm{MI}}(p) + (1-\lambda_t)\,u_t^{\mathrm{KG}}(p),
\qquad
p_t=\arg\max_{p\in P} s_t(p).
\]
In the notebook implementation, this mixed rule is computed as an Expected Free Energy score
$G_t(p)=-s_t(p)$ and the selected prompt is $p_t=\arg\min_p G_t(p)$, which is equivalent.
The weight decays linearly over the available training steps,
\[
\lambda_t=\max\!\left(0.1,\;1-\frac{t}{T-1}\right),
\]
where $T$ is the number of available training steps, at most $400$. Thompson sampling draws one $\tilde\theta_p\sim\mathrm{Beta}(\alpha_p,\beta_p)$ per prompt and chooses $p_t=\arg\max_{p\in P}\tilde\theta_p$. Round robin cycles deterministically through the prompt set with $p_t$ equal to the $t \bmod |P|$ element under a fixed ordering of $P$.

After training, the selected prompt is the posterior mean maximizer $\hat p=\arg\max_{p\in P} m_p$. This single prompt is then evaluated on the shared holdout set using the same user template and the same parsing rule. Reported holdout accuracy is the fraction of correct parses, and token statistics are computed from the API token usage for those calls.

As a concrete example, suppose $p_t=\texttt{units\_and\_scales}$, the parsed model reply is \texttt{B}, the gold key is \texttt{B}, and the API reports $\tau_t=134$. Then $y_t=1$, the posterior for that prompt updates from $\mathrm{Beta}(1,1)$ to $\mathrm{Beta}(2,1)$, and its cost estimate updates to $\hat c\leftarrow 0.9\,\hat c + 0.1\cdot 134$. All acquisition scores for the next step are computed from the updated posteriors and cost estimates.


\section{Jailbreaking and Defensive Design}
\label{sec:jailbreak_bilevel}

Adaptive prompt attacks can be viewed as a learning process. Each exchange provides feedback, and that feedback can be used to refine later attempts. The same information accounting used in benign interaction can therefore be turned around and used to describe what an attacker learns about a defense surface. The aim here is to formalize this structure and to suggest defensive objectives. It is not to propose new attack methods.

Let $\tilde x_t$ denote the visible text presented to the model at turn $t$. Let
\[
d=(s,\phi,\kappa,\pi_{\mathrm{clar}},g,f,\rho)
\]
collect the system prompt $s$, guard parameters $\phi$, decoding constraints $\kappa$, a clarifying policy $\pi_{\mathrm{clar}}$, tool and data access rules $g$, filters $f$, and rate limits $\rho$. Let $d_{\mathrm{pub}}$ denote the components of $d$ observable to the attacker. An action $a_t$ represents any intervention that changes what the model sees next, such as editing the prompt or influencing retrieved context. Given $(\tilde x_t,a_t,d)$, the system produces a response $Y_t$ with
\[
Y_t \sim p_\theta(y\mid \tilde x_t,a_t,d).
\]

Let $\mathcal{U}$ denote a set of undesired outcomes and let $V_t=\mathbf{1}\{Y_t\in\mathcal{U}\}$ be the associated indicator at turn $t$. In tool using systems, $\mathcal{U}$ can be understood broadly. It can include unsafe text, unauthorized tool actions, or leakage of sensitive data through outputs or tool calls. Let $V=\mathbf{1}\{\exists t\le T:\,V_t=1\}$ denote the session level failure indicator. Let $Z_t$ denote an observable signal derived from the $t$-th exchange, such as a refusal flag, the returned text, tool call traces, error messages, or coarse timing and length cues.

A stylized adaptive attacker can be modeled with latent variables $\Xi$ that represent unknown aspects of the system that are useful for exploitation. Examples include which instruction sources dominate, which filters fail under stress, or which tool routes expose sensitive resources. Let $h_{t-1}$ denote the interaction history available to the attacker up to turn $t-1$. Starting from a prior $q_1(\Xi)$, after choosing $a_t$ and observing $Z_t$, the attacker updates a belief by Bayes
\[
q_{t+1}(\Xi)\propto q_t(\Xi)\,p(Z_t\mid \Xi,h_{t-1},a_t,d_{\mathrm{pub}}).
\]
A single step action rule that makes the exploration and exploitation trade explicit is
\begin{equation}
a_{t}\in\arg\max_{a\in\mathcal{A}}
\Big[
\mathbb{E}[V_t\mid h_{t-1},a,q_t,d_{\mathrm{pub}}]
+\lambda\,I_{q_t}(\Xi;Z_t\mid h_{t-1},a,d_{\mathrm{pub}})
\Big]
\quad\text{s.t.}\quad
\sum_{k=1}^{t}\mathrm{tokens}(a_k)\le B.
\label{eq:att_outer_short_app}
\end{equation}
The first term rewards actions that are likely to succeed immediately. The second rewards actions whose observable consequences reduce uncertainty about $\Xi$, which can make later actions more effective.

A defender chooses $d$ once and then faces an adaptive policy. Let $\alpha$ denote an attacker policy that maps the interaction history to actions and obeys the same budget. Let $\mathsf{intent}\in\{\mathrm{benign},\mathrm{adversarial}\}$ denote the session type. A robust design objective can be written as
\begin{equation}
\begin{aligned}
\min_{d\in\mathcal{D}}
\;&
\sup_{\alpha\in\Pi_B}
\Big[
\mathbb{E}[V\mid d,\alpha]
+\lambda_{1} I_{q_1}(\Xi;Z_{1:T}\mid d_{\mathrm{pub}},\alpha)
+\lambda_{2}\mathrm{cost}(d)
\Big]\\
\text{s.t.}\;&
\mathbb{E}[\mathrm{utility}\mid d,\ \mathsf{intent}=\mathrm{benign}]\ge \tau.
\end{aligned}
\label{eq:defense_minmax_short_app}
\end{equation}
where $Z_{1:T}$ denotes the sequence of observed signals, $\Pi_B$ is the set of budget constrained adaptive attacker policies, and the mutual information term is computed under the attacker prior $q_1$. The information term penalizes designs that allow rapid learning about $\Xi$ through interaction. The constraint enforces acceptable utility on benign sessions.

It is often useful to report a token normalized leakage rate. Define
\[
\mathrm{IE}_{1\mathrm{k}}
=
1000\,
\frac{\sum_{t=1}^{T}\Delta I_t}{\sum_{t=1}^{T}\mathrm{tokens}(a_t)},
\qquad
\Delta I_t
=
\mathrm{KL}\!\big(q_{t+1}(\Xi)\,\|\,q_t(\Xi)\big).
\]
Under a Bayesian update, $\Delta I_t$ is the realized information gain about $\Xi$ from observing $Z_t$. A bound of the form $\mathrm{IE}_{1\mathrm{k}}\le \eta$ limits the total information learned about $\Xi$ to scale linearly with the total interaction budget.

Several practical controls map cleanly onto these terms. Refusal shaping aims to make the refusal branch as uninformative as possible. Let $F_t$ denote the refusal indicator at turn $t$. In an idealized model with no other side channels, drawing refusals from a fixed distribution that does not depend on $\Xi$ yields $I(\Xi;Y_t\mid F_t=1,d_{\mathrm{pub}})=0$. In practice, it also helps to control easy cues such as response length, formatting, and latency. Separation of authority targets indirect prompt injection and related failures where untrusted text is treated as control. The configuration $d$ can restrict tool permissions, keep secrets out of the model context, and ensure that untrusted content cannot directly trigger privileged actions. Deterministic validation of outputs and tool calls, and explicit approval for high risk actions, reduce the value of any single successful manipulation. Finally, budgets and rate limits directly bound the total interaction length (with output length capped by $\kappa$) and therefore bound total leakage even when per step leakage cannot be driven to zero.

Information terms are difficult to estimate in real systems, and $\Xi$ is a modeling choice rather than a directly observable quantity. In practice, mutual information is approximated using surrogate models, calibrated uncertainty estimates, and Monte Carlo evaluation. The main point is structural, i.e., defense can be framed not only as reducing the chance of an undesired outcome, but also as reducing the rate at which an attacker learns from interaction.

\section{The Active Inference View of Bilevel Reinforcement Learning}
\label{sec:bilevel_rl_vs_ai}

Bilevel reinforcement learning and active inference share a two level structure. An outer choice fixes the conditions of a control problem, and an inner solution responds by producing behavior that is optimal under those conditions. In reinforcement learning the inner object is a policy that maximizes expected return in an induced MDP. In active inference the inner object is an inference step that updates beliefs, and the outer object selects actions that balance preferred outcomes against uncertainty reduction.

In bilevel reinforcement learning, a designer selects configuration variables $x$, such as reward weights, prompt parameters, or safety settings. A context $\xi$ is drawn from a distribution $\mathcal{D}$, and together $(\xi,x)$ define an MDP $M(\xi,x)$ with horizon $H$. The agent computes a best response policy
\[
\pi^{*}(\cdot\mid \xi,x)\in\arg\max_{\pi} J(\pi;\xi,x),
\qquad
J(\pi;\xi,x)=\mathbb{E}\!\Big[\sum_{t=1}^{H} r_t\Big].
\]
The outer problem chooses $x$ to optimize expected performance under this response, often with a regularizer $\Omega$,
\[
\min_{x}\ -\mathbb{E}_{\xi\sim\mathcal{D}}\!\big[J(\pi^{*};\xi,x)\big] + \Omega(x).
\]
This captures contextual Stackelberg style design problems, where a leader selects $x$ and a follower solves the induced MDP, and it also covers outer loop design in modern learning systems \citep{thoma2024cbrl_neurips,shen2024pmlr}.

Active inference evaluates behavior through expected free energy. Preferences are encoded by a distribution $p^{\star}(o_t)$ over observations, and latent variables $\Theta$ represent what remains uncertain and worth inferring. A horizon objective that matches the standard risk and epistemic decomposition can be written as
\[
\mathcal{G}_{H}(\pi;x)
=
\mathbb{E}\!\left[\sum_{t=1}^{H}
-\ln p^{\star}(o_t)
+
\mathcal{A}_t
-
\lambda\,I(\Theta; o_t \mid h_{t-1},\pi,x)
\right],
\]
where $h_{t-1}$ denotes the history up to time $t-1$, and $\mathcal{A}_t$ is an ambiguity term that penalizes observations that remain noisy even when $\Theta$ is known. All expectations, entropies, and mutual informations are taken under the predictive distribution induced by the generative model and the policy $\pi$, conditional on $h_{t-1}$ and $x$. A convenient choice is
\[
\mathcal{A}_t
=
\mathbb{E}\!\left[H\!\left(o_t \mid \Theta, h_{t-1}, \pi, x\right)\right].
\]
When the observation model is deterministic given $\Theta$ and the chosen action, $\mathcal{A}_t$ vanishes and the epistemic term reduces to expected information gain. In that deterministic case, minimizing expected free energy reduces to minimizing expected risk while maximizing information gain \citep{friston2017process,parr2019generalised}.

The link to reward maximization becomes explicit once preferences are written as reward. Define
\[
r_t^{\mathrm{pref}}=\ln p^{\star}(o_t).
\]
Maximizing $\sum_t r_t^{\mathrm{pref}}$ is equivalent to minimizing the cumulative risk term $\sum_t -\ln p^{\star}(o_t)$. If an intrinsic term is added that rewards information gain, then a return objective aligns with the expected free energy form. One simple epistemic reward is
\[
r_t^{\mathrm{epi}}
=
\lambda\,I(\Theta; o_t \mid h_{t-1},\pi,x),
\]
together with a penalty that matches ambiguity. The combined return
\[
\tilde r_t
=
r_t^{\mathrm{pref}}
+
r_t^{\mathrm{epi}}
-
\mathcal{A}_t
\]
mirrors the expected free energy objective up to constants and sign conventions. The alignment is at the level of objectives. The modeling stance still differs, since active inference represents goals as preferences over observations and treats belief dynamics as part of the state, while reinforcement learning often treats rewards as primitive and leaves inference implicit.

When information acquisition has an operational cost, the outer design problem can trade off task value, cost, and information. It is often useful to separate information that supports performance from information that should be difficult to extract. Let $c_t(x)$ denote operational cost, such as token usage or tool latency. Let $\Theta_{\mathrm{task}}$ denote variables whose rapid identification is beneficial, and let $\Theta_{\mathrm{sens}}$ denote variables whose rapid identification is undesirable. Define
\[
\mathcal{I}_{\mathrm{task}}(x)=\mathbb{E}\!\Big[\sum_{t=1}^{H} I(\Theta_{\mathrm{task}};o_t\mid h_{t-1},\pi^*,x)\Big],
\quad
\mathcal{I}_{\mathrm{sens}}(x)=\mathbb{E}\!\Big[\sum_{t=1}^{H} I(\Theta_{\mathrm{sens}};o_t\mid h_{t-1},\pi^*,x)\Big].
\]
One possible outer objective is
\[
\min_{x}\ 
-\mathbb{E}_{\xi}\!\big[J(\pi^{*};\xi,x)\big]
+
\mu\,\mathbb{E}\!\Big[\sum_{t=1}^{H} c_t(x)\Big]
-
\gamma\,\mathcal{I}_{\mathrm{task}}(x)
+
\eta\,\mathcal{I}_{\mathrm{sens}}(x),
\]
with $\gamma,\eta,\mu\ge 0$. The same bilevel machinery applies, since $\pi^{*}$ is still the inner best response, and the outer variables can be updated using hypergradients that differentiate through this inner solution, as in BLPO \citep{prakash2025blpo}.

For language model agents, clarifying questions and tool calls are actions that purchase information at a measurable token cost. The preceding objectives make the trade between utility, cost, and uncertainty reduction explicit. Increasing the weight on task relevant information gain encourages earlier probing, and increasing the weight on sensitive information discourages probing that may overreach, while keeping the inner and outer roles of inference and control distinct.


\end{document}